\renewcommand\footnotetextcopyrightpermission[1]{} % removes footnote with conference information in first column
\renewcommand\@formatdoi[1]{\ignorespaces}
\theoremstyle{definition}
\newcommand{\bp}{\begin{proof} \small }
\newcommand{\ep}{\end{proof} \normalsize}
\newcommand{\epx}{\end{proof} \small}
\newcommand{\bpa}{\begin{proofappx} \footnotesize }
\newcommand{\epa}{\end{proofappx} \small }
\newtheorem{theorem}{Theorem}
\newtheorem{proposition}{Proposition}
\newtheorem{assumption}{Assumption}
\newtheorem{definition}{Definition}
\newtheorem*{theorem*}{Theorem}
\newtheorem*{proposition*}{Proposition}
\newtheorem*{corollary*}{Corollary}
\newtheorem*{lemma*}{Lemma}
\newtheorem*{assumption*}{Assumption}
\newtheorem*{definition*}{Definition}
\newtheorem*{claim*}{Claim}
\newcommand{\be}{\begin{equation}}
\newcommand{\ee}{\end{equation}}
\newcommand{\bs}{\begin{subequations}}
\newcommand{\es}{\end{subequations}}
\newcommand{\bq}{\begin{eqnarray}}
\newcommand{\eq}{\end{eqnarray}}
\newcommand{\bqn}{\begin{eqnarray*}}
\newcommand{\eqn}{\end{eqnarray*}}
\newcommand{\ba}{\left[ \begin{array}}
\newcommand{\ea}{\\ \end{array} \right]}
\newcommand{\ben}{\begin{enumerate}}
\newcommand{\een}{\end{enumerate}}
\def\real{{\mathchoice%
{\hbox{\rm\setbox1=\hbox{I}\copy1\kern-.45\wd1 R}}
{\hbox{\rm\setbox1=\hbox{I}\copy1\kern-.45\wd1 R}}
{\hbox{\scriptsize\rm\setbox1=\hbox{I}\copy1\kern-.45\wd1 R}}
{\hbox{\scriptsize\rm\setbox1=\hbox{I}\copy1\kern-.45\wd1 R}}}}
\def\Zint{{\mathchoice{\setbox1=\hbox{\sf Z}\copy1\kern-.75\wd1\box1}
{\setbox1=\hbox{\sf Z}\copy1\kern-.75\wd1\box1}
{\setbox1=\hbox{\scriptsize\sf Z}\copy1\kern-.75\wd1\box1}
{\setbox1=\hbox{\scriptsize\sf Z}\copy1\kern-.75\wd1\box1}}}
\newcommand{\complex}{ \hbox{\rm C\kern-0.45em\rule[.07em]{.02em}{.58em}%
\kern 0.43em}}
\newcommand{\algmargin}{\the\ALG@thistlm}
\newlength{\whilewidth}
\algnewcommand{\parState}[1]{\State%
	\parbox[t]{\dimexpr\linewidth-\algmargin}{\strut #1\strut}}
\begin{document}

%%
%% The "title" command has an optional parameter,
%% allowing the author to define a "short title" to be used in page headers.
\title{CAFE: Carbon-Aware Federated Learning in Geographically Distributed Data Centers}
% \titlenote{This work is partially supported by NSF under grants 2006630, 2033681, 2029858, 2044991 and 2319780.}
%%
%% The "author" command and its associated commands are used to define
%% the authors and their affiliations.
%% Of note is the shared affiliation of the first two authors, and the
%% "authornote" and "authornotemark" commands
%% used to denote shared contribution to the research.

\author{Jieming Bian}
\email{jxb1974@miami.edu}
\affiliation{%
  \institution{University of Miami}
  \city{Coral Gables}
  \state{FL}
  \country{USA}
}

\author{Lei Wang}
\email{lxw725@miami.edu}
\affiliation{%
  \institution{University of Miami}
  \city{Coral Gables}
  \state{FL}
  \country{USA}
}

\author{Shaolei Ren}
\email{sren@ece.ucr.edu}
\affiliation{%
  \institution{University of California, Riverside}
  \city{Riverside}
  \state{CA}
  \country{USA}
}

\author{Jie Xu}
\email{jiexu@miami.edu}
\affiliation{%
  \institution{University of Miami}
  \city{Coral Gables}
  \state{FL}
  \country{USA}
}

% \authornote{This work is partially supported by NSF under grants 2006630, 2033681, 2029858, 2044991 and 2319780.}

%%
%% By default, the full list of authors will be used in the page
%% headers. Often, this list is too long, and will overlap
%% other information printed in the page headers. This command allows
%% the author to define a more concise list
%% of authors' names for this purpose.
% \renewcommand{\shortauthors}{Trovato and Tobin, et al.}

%%
%% The abstract is a short summary of the work to be presented in the
%% article.
\begin{abstract}
Training large-scale artificial intelligence (AI) models demands significant computational power and energy, leading to increased carbon footprint with potential environmental repercussions. This paper delves into the challenges of training AI models across geographically distributed (geo-distributed) data centers, emphasizing the balance between learning performance and carbon footprint. We consider Federated Learning (FL) as a solution, which prioritizes model parameter exchange over raw data, ensuring data privacy and compliance with local regulations. Given the variability in carbon intensity across regions, we propose a new framework called CAFE (short for Carbon-Aware Federated Learning) to optimize training within a fixed carbon footprint budget. Our approach incorporates coreset selection to assess learning performance, employs the Lyapunov drift-plus-penalty framework to address the unpredictability of future carbon intensity, and devises an efficient algorithm to address the combinatorial complexity of the data center selection. Through extensive simulations using real-world carbon intensity data, we demonstrate the efficacy of our algorithm, highlighting its superiority over existing methods in optimizing learning performance while minimizing environmental impact.
\end{abstract}

%%
%% The code below is generated by the tool at http://dl.acm.org/ccs.cfm.
%% Please copy and paste the code instead of the example below.
%%
% \begin{CCSXML}
% <ccs2012>
%  <concept>
%   <concept_id>10010520.10010553.10010562</concept_id>
%   <concept_desc>Computer systems organization~Embedded systems</concept_desc>
%   <concept_significance>500</concept_significance>
%  </concept>
%  <concept>
%   <concept_id>10010520.10010575.10010755</concept_id>
%   <concept_desc>Computer systems organization~Redundancy</concept_desc>
%   <concept_significance>300</concept_significance>
%  </concept>
%  <concept>
%   <concept_id>10010520.10010553.10010554</concept_id>
%   <concept_desc>Computer systems organization~Robotics</concept_desc>
%   <concept_significance>100</concept_significance>
%  </concept>
%  <concept>
%   <concept_id>10003033.10003083.10003095</concept_id>
%   <concept_desc>Networks~Network reliability</concept_desc>
%   <concept_significance>100</concept_significance>
%  </concept>
% </ccs2012>
% \end{CCSXML}

% \ccsdesc[500]{Computer systems organization~Embedded systems}
% \ccsdesc[300]{Computer systems organization~Redundancy}
% \ccsdesc{Computer systems organization~Robotics}
% \ccsdesc[100]{Networks~Network reliability}

%%
%% Keywords. The author(s) should pick words that accurately describe
%% the work being presented. Separate the keywords with commas.
\keywords{Sustainable AI, Federated Learning, Carbon Footprint, Geographically Distributed Data Centers}

%% A "teaser" image appears between the author and affiliation
%% information and the body of the document, and typically spans the
%% page.
% \begin{teaserfigure}
%   \includegraphics[width=\textwidth]{sampleteaser}
%   \caption{Seattle Mariners at Spring Training, 2010.}
%   \Description{Enjoying the baseball game from the third-base
%   seats. Ichiro Suzuki preparing to bat.}
%   \label{fig:teaser}
% \end{teaserfigure}

% \received{20 February 2007}
% \received[revised]{12 March 2009}
% \received[accepted]{5 June 2009}

%%
%% This command processes the author and affiliation and title
%% information and builds the first part of the formatted document.
\maketitle
\pagestyle{plain}
\section{Introduction}

Leveraging the advancements in deep neural networks, artificial intelligence (AI) has evolved into an indispensable juggernaut, driving scientific breakthroughs, propelling business expansion, and confronting global challenges across various critical domains \cite{heaton2018ian, rolnick2022tackling, kang2023optical}. The efficacy of AI hinges significantly on computationally intensive calculations, essential for extracting valuable insights from vast datasets. This training process demands both robust computational capabilities and extensive data, posing a critical challenge. To surmount this challenge, current AI models, particularly large generative models like GPT-3 \cite{brown2020language}, are trained within data centers. These data centers are equipped with expansive clusters of servers, each hosting multiple graphics processing units (GPUs), ensuring the requisite computational power for training large models.

% \begin{figure}[t]
% \centering
% \includegraphics[width=0.8\linewidth]{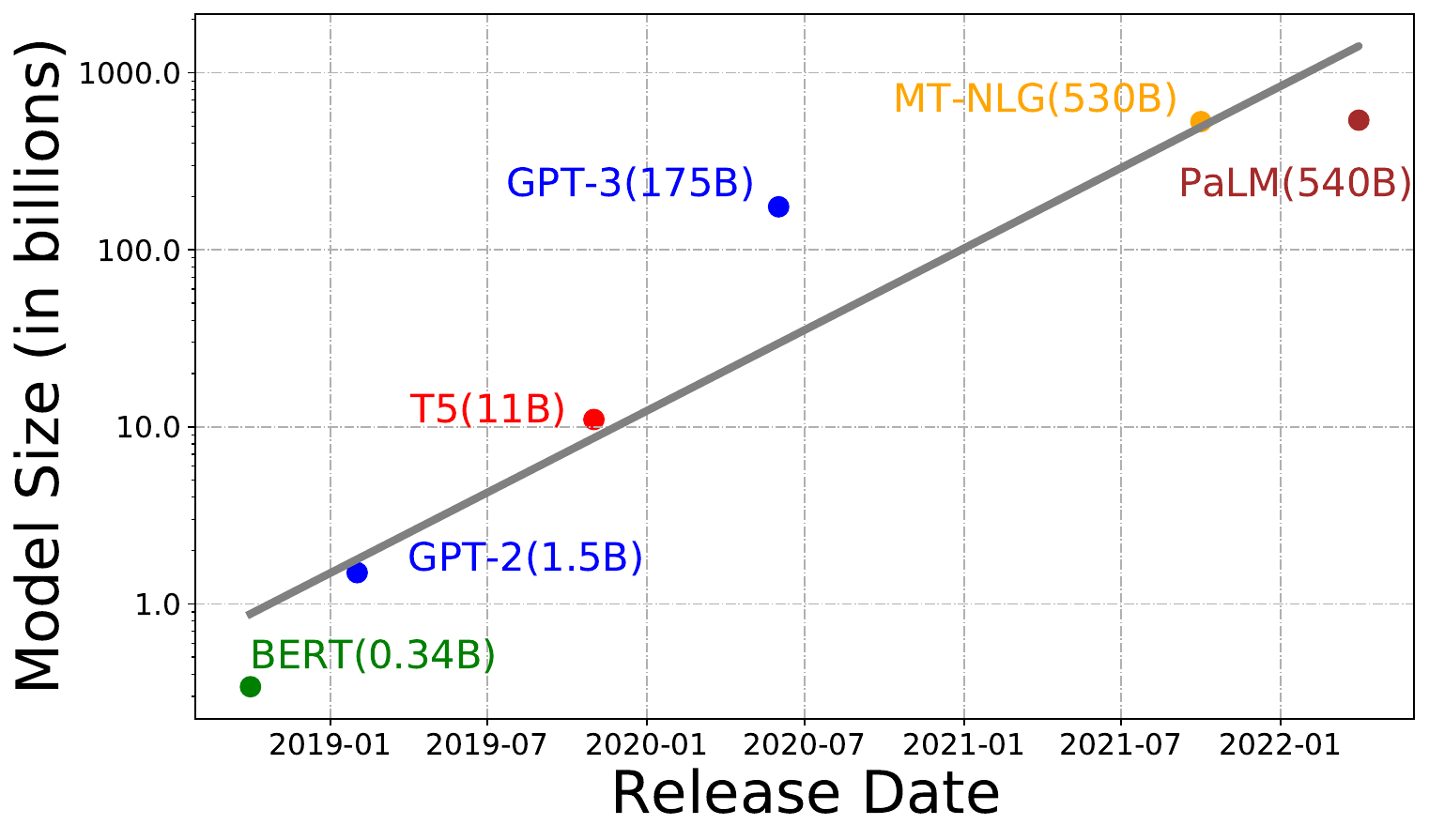}
% \caption{The size of current AI models is experiencing exponential growth.}
% \label{fig:LLM}
% \end{figure}

\begin{figure}[t]
    \centering
    \begin{minipage}[t]{0.48\linewidth}
       \includegraphics[width=1\linewidth, height=0.65\linewidth]{figures/model_size.pdf}
        \caption{The size of current AI models is experiencing exponential growth.}
        \label{fig:LLM}
    \end{minipage}\hfill
    \begin{minipage}[t]{0.48\linewidth}
        \includegraphics[width=1\linewidth, height=0.65\linewidth]{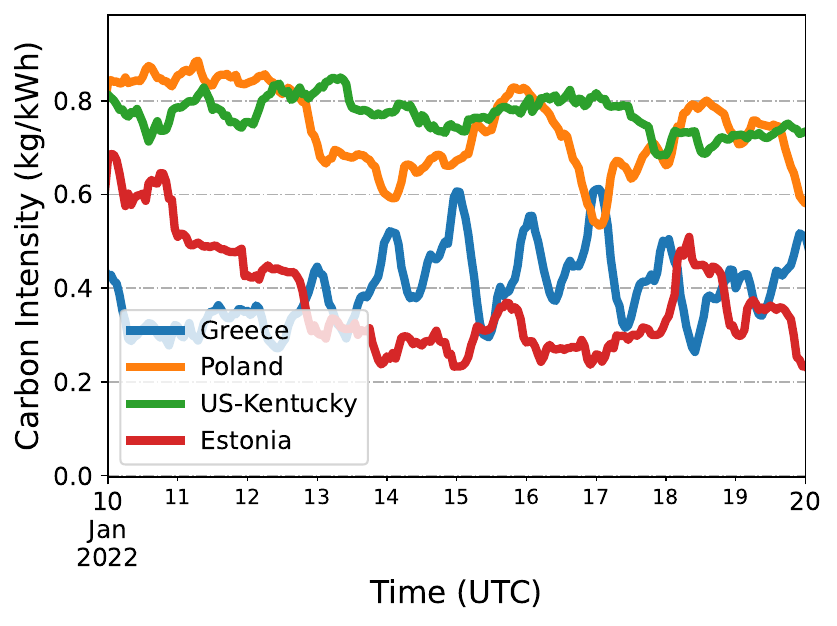}
        \caption[5pt]{The carbon intensity varies across different times and data centers.}
        \label{fig:Intensity}
    \end{minipage}
    \vspace{-20 pt}
\end{figure}

While large-scale AI models offer remarkable advantages, they are accompanied by substantial costs. The exponential growth in the size of current AI models (i.e. shown in Fig. \ref{fig:LLM}) has created an insatiable energy demand, leading to adverse environmental consequences \cite{brown2020language, patterson2022carbon, schwartz2020green, strubell2019energy, henderson2020towards, luccioni2022estimating}. For instance, aside from the environmental toll of chip manufacturing, involving raw material extraction and the use of toxic chemicals \cite{switzer2023junkyard, gupta2022act}, and the noise pollution generated by operating AI servers \cite{monserrate2022cloud}, training a large language model like GPT-3 \cite{brown2020language} and LaMDA \cite{thoppilan2022lamda} can easily devour hundreds of megawatt-hours of electricity. Even with the adoption of industry best practices to mitigate AI's resource consumption, AI models at Google, for example, now account for approximately 10-15\% of its data center's total energy consumption \cite{patterson2022carbon}. Such substantial energy consumption during AI model training results in the generation of numerous tons of carbon emissions. These incremental carbon emissions have led to increased social costs \cite{cruz2022local} and may contribute to local issues such as ozone pollution and premature mortality \cite{jacobson2010enhancement}. With the prediction that both AI models and AI data will continue to expand \cite{wu2022sustainable}, potentially exacerbating environmental issues like increasing carbon emissions, there has been a recent surge in research focused on making AI more energy-efficient and environmentally friendly. Research endeavors have explored various approaches, including computationally efficient training and inference techniques \cite{chen2023frugalgpt, rajbhandari2022deepspeed}, the design of energy-efficient GPUs and accelerators \cite{xu2020autodnnchip, gupta2020accelerator}, the implementation of green cloud infrastructures \cite{gandhi2022metrics, bashir2021enabling, acun2023carbon}, and the adoption of carbon-aware scheduling practices \cite{wu2022sustainable, henderson2020towards}.

%While current studies show promise in reducing carbon emissions, they often overlook the critical aspect of AI model performance. 
The effectiveness of training large models heavily relies on access to extensive training data \cite{brown2020language}. Typically, data centers gather data from the country or region in which they are located. These data centers, such as Google's data centers \cite{patterson2022carbon}, are distributed across various countries. Since data preferences vary between countries, the data collected by data centers in different countries may exhibit significant non-independence and non-identical distribution (non-IID). Additionally, these data centers are subject to the laws and regulations of their respective host countries, which often prohibit the sharing of raw data across international borders. Consequently, data collected by one data center can only be used for training purposes within that specific data center. Therefore, training a large AI model solely within one data center is unlikely to yield satisfactory learning performance due to data bias. 

Considering the challenges of managing non-IID data distribution and the constraints on transferring raw data between centers, Federated Learning \cite{mcmahan2017communication} (FL) emerges as an encouraging solution for training AI models across geographically distributed (geo-distributed) data centers. Unlike conventional distributed learning approaches, FL prioritizes the exchange of model parameters over raw data, ensuring data privacy and compliance with local regulations. Additionally, training across such geo-distributed data centers using FL has the potential to reduce overall carbon emissions. Due to geographical disparities, the environmental impact of AI model training varies significantly by region. For example, regional disparities in carbon intensity are evident; in 2020, only 4\% of the energy consumed by Google's data center in Singapore came from carbon-free sources, compared to 94\% in Finland \cite{patterson2022carbon}, resulting in a substantial 23-fold difference. Fig. \ref{fig:Intensity} illustrates the variability in carbon intensity over both time and among different data centers. By strategically selecting data centers located in regions with lower carbon intensity for training, we can effectively curtail the total carbon emissions associated with AI model training. Nonetheless, effective data center selection presents several technical challenges outlined below:

% \begin{figure}[h]
% \centering
% \includegraphics[width=0.7\linewidth]{figures/Intensity.pdf}
% \caption{The carbon intensity varies across different times and data centers.}
% \label{fig:Intensity}
% \end{figure}

\textbf{Balancing Learning Performance and Carbon Footprint:} Given the imperative to reduce overall carbon emissions, it is not feasible to have all data centers actively engaged throughout the entire training process. Consequently, within each training round of FL, a specific subset of data centers is chosen to participate. Notably, the selection of diverse subsets of data centers during training can lead to varied learning performance outcomes due to variations in their data distributions and quality. However, this dependency remains elusive, as learning performance can only be assessed for the chosen subset after the training process is completed.

\textbf{Unpredictable Future Carbon Intensity:} The carbon emission budget is a crucial long-term consideration, spanning the entire training process. A substantial carbon footprint at present implies that a diminished carbon emission budget remains for forthcoming training rounds. Consequently, decisions regarding data center selection are intrinsically interconnected over time. Nevertheless, due to the formidable challenge of forecasting future carbon intensity for estimating potential emissions, data center choices must rely solely on presently accessible information. While \cite{10.1145/3538637.3538849, 10.1145/3575813.3597346, 10.1145/3607114.3607117} enable the prediction of carbon intensity in the short-term future, the estimation of long-term future carbon intensity remains elusive, necessitating the problem to be treated as an online problem, albeit with a shifted scope. 

\textbf{The Intricate Combinatorial Nature of Selection Decision-Making:} It is evident that the task of choosing the ideal set of data centers to engage in FL constitutes a combinatorial optimization challenge. While an exhaustive search can theoretically yield the optimal solution within a finite timeframe, the complexity of this process escalates exponentially with the total number of data centers involved, rendering such an approach impractical in real-world scenarios. Hence, the development of a low-complexity algorithm becomes imperative.

In this paper, we propose a novel framework called \underline{C}arbon \underline{A}ware \underline{F}ederated L\underline{e}arning (CAFE).  Our primary focus centers on developing an adaptive data center selection algorithm, in scenarios where a fixed carbon footprint budget must be maintained throughout an extended training period. To tackle the challenges outlined above, our algorithm introduces the following key innovations.

1. In order to assess the learning performance of the chosen data centers, we utilize the notion of coreset selection to enhance the efficiency of machine learning training. We create a utility function designed to gauge how effectively a selected subset of data centers can represent the entire dataset when aggregated by the server. To tackle the challenge of evaluating learning performance before the training process concludes, we introduce a probing step. This step is designed to estimate the current model gradients for all data centers, leveraging small, randomly selected data samples. This approach not only facilitates utility calculation but also minimizes associated overhead.

2. In order to conduct efficient data center selection in the absence of future carbon intensity information, we employ the Lyapunov drift-plus-penalty framework to decompose the long-term optimization into individual per-slot problems. Our algorithm comes with theoretical performance guarantees, establishing a lower bound on the achieved learning utility and an upper bound on the resulting carbon footprint. This analysis not only extends previous theoretical findings but also acknowledges that the learning objective function is influenced by past selection decisions.

3. To tackle the intricate combinatorial complexity inherent in the data center selection problem, we cast it as a constrained submodular maximization challenge. We then devise efficient greedy algorithms, with performance guarantees, to address this problem. It is worth noting that these submodular problems are inherently non-monotone, due to the carbon footprint budget constraint. This non-monotonicity distinguishes them from conventional approaches used in similar FL client selection problems, rendering those approaches ineffective.

To assess the effectiveness of the proposed algorithm, we conduct comprehensive experimental simulations. These simulations involve scaling up small model training to replicate the complexities encountered when training large models. We utilize real-world carbon intensity data from geo-distributed data centers for these simulations. Our extensive experimentation demonstrates the efficiency and superiority of our approach, illustrating its substantial outperformance in comparison to baseline methods.

\section{Related Works}

Sustainable AI has garnered substantial attention in recent years \cite{brown2020language, henderson2020towards, thoppilan2022lamda, patterson2022carbon}. In the pursuit of rendering AI more energy-efficient and environmentally sustainable, many strategies and studies have been explored. These include endeavors to enhance computational efficiency in training and inference \cite{chen2023frugalgpt, rajbhandari2022deepspeed}, the design of energy-efficient GPUs and accelerators \cite{xu2020autodnnchip, gupta2020accelerator}, the implementation of carbon-aware task scheduling \cite{wu2022sustainable, henderson2020towards}, and the development of green cloud infrastructures \cite{gandhi2022metrics, bashir2021enabling, acun2023carbon}, among others. Most of these efforts primarily revolve around optimizing AI training within a single data center. However, recognizing the advantages stemming from data accumulation and the potential for reducing carbon emissions, we delve into the collaborative training of AI models across multiple geographically distributed data centers. In addition to these computational approaches, there exist non-computational strategies aimed at improving AI's environmental sustainability. For instance, data center operators have increasingly embraced carbon-free energy sources like solar and wind power to reduce AI's carbon footprint \cite{wu2022sustainable, google2022environmental, meta2021sustainability}. It is important to note that these non-computational approaches are complementary to our work.

Most existing FL studies \cite{mcmahan2017communication, lim2020federated, kang2020reliable, nguyen2021federated, li2021hermes} focus on enhancing model performance for mobile and edge devices. However, there is an increasing interest in exploring application domains involving clients with significant computational capabilities and electricity demands \cite{rieke2020future, so2022fedspace, nguyen2022deep}. Our research addresses data center-level applications using FL, presenting unique challenges in balancing learning performance and carbon footprint. Furthermore, our study pertains to the client selection problem in FL. Several selection criteria have been explored in recent literature, such as sampling clients based on their local dataset size \cite{mcmahan2017communication}, prioritizing clients with larger update norms \cite{chen2020optimal}, and selecting clients with higher losses \cite{cho2020client}. These methods differ from our approach, which aims to select a `coreset' representative of the entire client set. The most closely related work in FL considering client selection is \cite{balakrishnan2022diverse}, which also emphasizes the `coreset' in an FL context. However, \cite{balakrishnan2022diverse} exclusively emphasizes learning performance, using staleness information as a metric. In contrast, our research investigates the trade-off between learning performance, measured through a probing step, and carbon footprint, constrained by a long-term carbon footprint budget, thereby introducing additional complexity to the problem. Given the carbon footprint budget constraint, our selection problem's submodular nature is inherently non-monotone, setting our method apart from \cite{balakrishnan2022diverse}. Moreover, since we lack information on future carbon intensity, our study addresses a long-term online optimization problem, utilizing the Lyapunov drift-plus-penalty framework to break down the long-term optimization into individual per-slot challenges. Unlike other studies that apply the Lyapunov framework \cite{neely2022stochastic, sun2017survey}, our approach contends with the challenges posed by our learning objective function being influenced by previous selection decisions.

Our research is related with the existing literature on task scheduling in geo-distributed data centers \cite{hu2017time, li2023towards, polverini2013thermal, convolbo2018geodis}. For instance, \cite{hu2017time} aims to optimize big data processing across such centers, targeting reductions in both completion times and network costs. \cite{li2023towards} delves into the carbon and water footprints associated with AI model inference tasks. \cite{polverini2013thermal} introduces an online scheduling algorithm designed to balance energy costs, fairness, and latency, all while averting server overheating. Meanwhile, \cite{convolbo2018geodis} emphasizes optimizing data locality and transfer to diminish job makespan. While much of the current literature on geo-distributed data center scheduling primarily addresses resource allocation and distribution, our work uniquely concentrates on the training process and performance of large AI models within these centers.

\section{System Model}
We consider a training task that is distributed across a set of geo-distributed data centers, denoted as $\mathcal{N} = \{1,2,\dots, N\}$. Each participating data center, represented as $i\in \mathcal{N}$, houses a local dataset $\mathcal{D}_i$. These local datasets $\mathcal{D}_i$ are gathered by regional customers (for instance, data centers in different countries collect data specific to their respective countries), hence each local dataset $\mathcal{D}_i$ is non-IID. The goal of these data centers is to carry out a supervised learning task. In this context, the local dataset $\mathcal{D}_i$ is defined as a collection of data samples, each represented as a set of input-output pairs $\{x_j, y_j\}_{j=1}^{D_i}$. Here, $x_j \in \mathbb{R}^d$ is a $d$-dimensional input feature vector, $y_j \in \mathbb{R}$ is the corresponding output label, and $D_i$ represents the set of local dataset $\mathcal{D}_i$.

\begin{figure*}[t]
\vspace{-10 pt}
\centering
\includegraphics[width=0.95\linewidth]{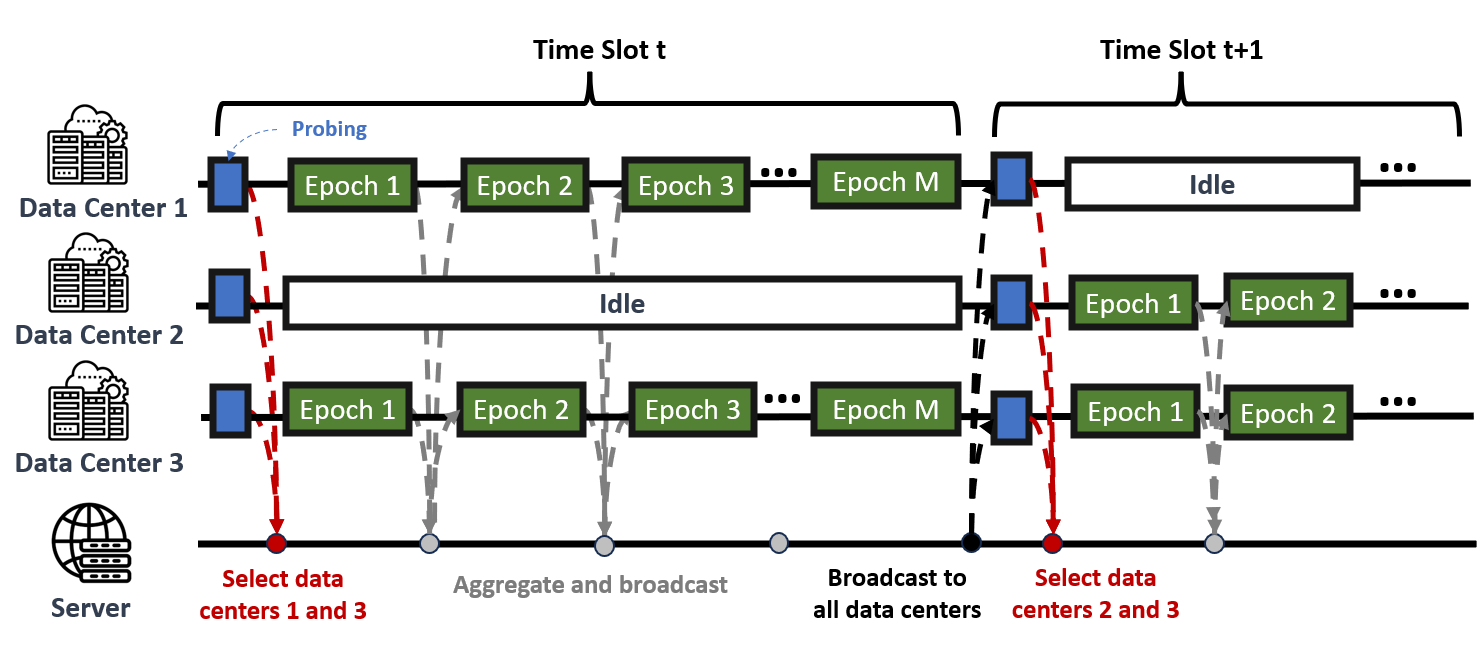}
\vspace{-10 pt}
\caption{The overview of CAFE.}
\label{fig:Cafe}
\vspace{-5 pt}
\end{figure*}

We briefly introduce the training process of our proposed Carbon-Aware Federated Learning (CAFE) as depicted in Fig. \ref{fig:Cafe}. The training process is organized into $T$ training time slots. For simplicity and without loss of generality, we assume that each data center possesses similar computational capabilities. Consequently, during each time slot, every selected data center can execute $M$ training epochs. In each training time slot, a server, which can be either a dedicated server or one of the data centers, aggregates local updates from all data centers. During a training time slot, denoted as $t \in \{1,\dots, T\}$, the data centers carry out the following sequence of actions: 
\begin{enumerate}
    \item \textbf{Probing}: All data centers acquire the latest global model. Each data center then selects a subset of local data, constituting $\epsilon$ percent of the total, to calculate an update gradient before training. The time cost of calculating such a gradient can be omitted when $\epsilon$ is small. Then the data centers send their gradients to the server.
    \item \textbf{Client Selection}: Based on the local gradients received from all data centers, $K^t$ data centers (where we denote the set of selected data centers as $\mathcal{K}^t$)  are selected to participate in this training time slot. These selected data centers perform the $M$ local training epochs, while the data centers not chosen take a break.
    \item \textbf{Local training}: The chosen $K^t$ data centers engage in communication with the server during each of the total $M$ training epochs. In each of these epochs, the server aggregates the local updates into a new model and then broadcasts this model to the selected $K^t$ data centers for the next round of local epoch training. 
\end{enumerate}

\textbf{Remark:} We observe that the primary distinctions between our proposed training process and classic FL are the inclusion of a probing step and the ability for data centers to communicate with the server during the local training. While this may introduce additional communications, it is justifiable in our considered setting. Traditional FL typically involves training among mobile devices, which often face communication constraints. In contrast, our approach focuses on training among data centers, which generally possess robust communication capabilities, making the need for extra communication less significant.

Upon understanding the training process of our proposed method, CAFE, the subsequent step involves determining the client selection for each time slot. Given the non-IID nature of datasets at individual data centers, it is crucial to strategically select data centers for each training slot, balancing carbon footprint efficiency with learning performance. These considerations will be elaborated and formulated in subsequent sections. Let \(a^t_i \in \{1, 0\}\) indicate the selection status of data center \(i\) during time slot \(t\), and \(\boldsymbol{a}^t = (a^t_1, ..., a^t_N)\) denote the aggregate data center selection decisions. Consequently, the set of selected data centers for time slot \(t\), \(\mathcal{K}^t\), is given by \(\mathcal{K}^t = \{i \mid a_i^t = 1\}\).

\subsection{Carbon Footprint}
Training large-scale AI models necessitates substantial energy consumption, primarily to power GPU operations. This high energy demand, particularly for electricity, results in significant carbon emissions, as electricity generation often relies on carbon-intensive fuels like coal. Therefore, we initially define the energy consumption for each data center during a specific training time slot $t$. It is clear that if a data center $i$ is chosen to participate in the training during time slot $t$ (i.e., $a_i^t = 1$), it will consume more energy. We represent the energy consumption of data center $i$ during the training time slot $t$ as follows:
\begin{align}
    e_i^t(a_i^t) = E_{i}^{c} + a_i^t E_{i}^{s}, 
\end{align}

Here, $E_{i}^c$ denotes the constant energy consumption during time slot $t$, which includes the static energy consumption when data center $i$ is not processing any workload. $E_i^s$ signifies the energy consumption when data center $i$ is selected to participate, which includes the energy used for the $M$ local epoch training sessions.

Next, we define the carbon intensity of data center $i$ during the training time slot $t$ as $\beta_i^t$, measured in kg per kWh. This carbon intensity $\beta_i^t$ can be determined by consulting the local utility or by averaging the carbon intensity of the grid's fuel mix. Studies, such as \cite{li2023towards}, have confirmed that carbon intensity varies based on the data center's location and time. Given the carbon intensity, we can calculate the carbon footprint for data center $i$ at time slot $t$ as:
\begin{align}
    c_i^t(a_i^t | \beta_i^t) =  \beta_i^t e_i^t(a_i^t).
\end{align}

The total carbon footprint at time slot $t$ can be calculated as:
\begin{align}
    c^t(\boldsymbol{a}^t| \boldsymbol{\beta}^t) = \sum_{i \in \mathcal{N}}  \beta_i^t e_i^t(a_i^t),
\end{align}
where $\boldsymbol{\beta}^t = (\beta_1^t, \dots, \beta_N^t)$.

\subsection{Learning Performance}
The training goal is to collaboratively train a machine learning model using the local dataset $\mathcal{D}_i$ stored at each data center $i$. This essentially involves solving a distributed optimization problem as follows:
\begin{align}
    \min_w f(w) = \frac{1}{N}\sum_{i=1}^N f_i(w).
\end{align}
Here, $f_i:\mathbb{R}^d \to \mathbb{R}$ is a loss function (typically non-convex in deep learning training scenarios) for data center $i$, and $w$ represents the model parameters we aim to train. This distributed optimization problem can be addressed by applying gradient descent-related techniques. During each training time slot $t$, only the selected data centers (i.e., $a_i^t = 1$) participate in the $M$ local training epochs. Therefore, for most of the time during each training time slot, we can only access the gradients of selected data centers. However, to obtain the most accurate and stable direction towards the minimum of the loss function, the ideal aggregated gradient is $\frac{1}{N}\sum_{i \in \mathcal{N}} \nabla f_i (w)$, which assumes all data centers participate. Although involving all data centers could improve training performance, it is not feasible to have all data centers participate in all training time slots due to the essential concern of limiting the total carbon footprint. 

Consequently, during each time slot $t$, we need to assess how well the gradients from the selected set $\mathcal{K}^t$ approximate the ideal aggregated gradient and use this assessment to reflect the learning performance. Inspired by \cite{mirzasoleiman2020coresets}, we decide to employ the concept of coreset. The fundamental idea of coreset is to represent elements not in the coreset using elements that are in the coreset. Therefore, we introduce the following metric to measure how well the selected set $\mathcal{K}^t$ serves as a `coreset', and further use it to describe the learning performance in time slot $t$:
\begin{align}
\label{metric}
    U^t(\boldsymbol{a}^t) = b - \sum_{j \in \mathcal{N}} \min_{i \in \mathcal{K}^t} \|\nabla f_j(w) - \nabla f_i(w)\|,
\end{align}
where $b$ is a constant positive number that is large enough (i.e. $ U^t(\boldsymbol{a}^t) \geq 0$ always holds) and the selected set can be expressed by $\mathcal{K}^t = \{i \mid a_i^t = 1\}$. When no data center is selected, the utility value is assigned a value of $U^t =  b - 2 N \max_{i \in \mathcal{N}} \| \nabla f_i(w)\|$. It is also worth noting that with different model parameters $w_1$ and $w_2$, the best approximating data center for data center $i$ can change.

\textbf{Remark:} To compute the utility metric $U^t$ for each time slot, it is necessary to calculate the gradient $\nabla f_i(w)$ for all data centers $i$. However, obtaining such gradients requires each data center to utilize all its data samples, which is not environmentally efficient. To address this, we introduce the probing step where we randomly select $\epsilon$ percent of the data samples to estimate $\nabla f_i(w)$. While there may be some discrepancies between the precise and estimated values, for the sake of theoretical simplicity, we omit these differences in the subsequent theoretical analysis. Furthermore, based on our simulation experiments, the probing step provides reasonably accurate estimates, and such differences can be disregarded.

\subsection{Problem Formulation}
Our focus is on maximizing the learning performance during the $T$ time slots. The goal is to optimize the average value of the metric, as defined in Eq. \ref{metric}. At the same time, we need to adhere to a long-term carbon constraint while determining the most suitable data center selection. Suppose that the total carbon constraint for the entire training procedure is represented by $H$. We can express this problem in the following way:
\begin{align}
\textbf{P1}&~~~\max_{\boldsymbol{a}^0, ..., \boldsymbol{a}^{T-1}} \frac{1}{T}\sum_{t=0}^{T-1} U^t(\boldsymbol{a}^t)\\
\text{s.t.}&~~~\sum_{t=0}^{T-1} c^t(\boldsymbol{a}^t| \boldsymbol{\beta}^t) \leq H \label{con:carbon}\\
&~~~a^t_i \in \{0, 1\}, \forall i, \forall t \label{con:selection}
\end{align}

Constraint \eqref{con:carbon} ensures that the total carbon emissions throughout the $T$ slots from all data centers must not surpass the budget $H$. Constraint \eqref{con:selection} specifies data center selection be binary. 

Solving \textbf{P1} is challenging due to our lack of future information. To optimally resolve \textbf{P1}, we would need complete offline data about the entire training process (i.e. carbon intensity), which is notably difficult to predict accurately in advance. Therefore, an online approach that can efficiently determine data center selection without the need to predict future events is necessary to be developed.

\subsection{Offline Benchmark}
Before we introduce the online algorithm, we first present a virtual case as a benchmark. Unlike classical online decision-making problems, the unique challenge in our situation is that the objective function is contingent on the model $ w $ provided at each time slot, which changes based on previous decisions. We consider a virtual model sequence $\{w^{full,1}, \dots, w^{full,T}\}$, where each $ w^{full,t} $ symbolizes the model fully averaged by \textbf{all} data centers to participate in all local $ M $ epochs at every time slot $ t $. This virtual model sequence is constructed to be independent of previous selection decisions, thus allowing us to examine a more straightforward scenario. With these virtual model sequences, and assuming that all $ T $ time slots' carbon intensity information (i.e., $\boldsymbol{\beta}^t$ for each $ t $) is known, we can solve the problem \textbf{P1} offline. We denote the optimal learning performance achieved in this virtual case as $\text{OPT}$, and the selection decision for each time slot in this situation is denoted as $\boldsymbol{a}^{opt,t}$.

It is important to note that the fully-participated model sequence $\{w^{full,1}, \dots, w^{full,T}\}$ is impractical to be known. Furthermore, predicting future carbon intensity information is also infeasible, as the carbon intensity varies during time slots and can be influenced by multiple factors. Consequently, the virtual selection decisions $\boldsymbol{a}^{opt,t}$ for each time slot $ t $ cannot be practically achieved. The purpose of introducing this virtual case is to utilize it as a benchmark for our practical online algorithm, which we will propose in the following section. By setting this theoretical standard, we can better assess the performance and potential limitations of our solution.

\section{Online Data Center Selection}

\subsection{Methodology}
Our online method is built upon the Lyapunov drift-plus-penalty framework, taking into account finite $T$ time slots. We create a virtual carbon deficit queue, denoted by $q^t$, which helps guide the selection of data centers in accordance with the total carbon footprint constraint. Notably, while the original Lyapunov framework focuses on infinite $T$, our context requires a finite number of total training slots $T$. To address this, we follow \cite{wang2023federated}'s approach, setting the initial value of the virtual queue as $q^0 > 0$. The queue then gets updated at the conclusion of each time slot $t$, as follows:
\begin{align}\label{queue}
q^{t+1} = max\{0, c^t - H/T + q^t\},
\end{align}
where $c^t$ represents $c^t(\boldsymbol{a}^t| \boldsymbol{\beta}^t)$. Essentially, the virtual queue captures the accumulated transgressions of the constraint. Therefore, our goal is to both maximize the metric as defined in Eq. \ref{metric}, and minimize the size of the virtual queue. We introduce a constant $V > 0$, which we will discuss in greater detail later. For each time slot $t$, we aim to resolve the following per-slot problem:
\begin{align}
\textbf{P2}&~~~\max_{\boldsymbol{a}^t}  V\cdot U^t(\boldsymbol{a}^t) - q^t \cdot c^t\\
\text{s.t.}&~~~ \text{Selection constraints:} ~~~~\eqref{con:selection}
\end{align}

By introducing the additional term $q^t \cdot c^t$, the system now factors in the carbon deficit of the data centers during the current time slot's data center selection. As such, we need to strike a balance between maximizing learning performance (as defined by the metric) and minimizing carbon emissions. The performance of the online algorithm will be assessed in the following section. For now, we will concentrate on how to resolve the per-slot problem.

\subsection{Per-Slot Problem}
While \textbf{P2} does not depend on future information, it remains difficult to solve due to the binary selection constraints for each $a_i^t$. Basic methods, such as Exhaustive Search, can require prohibitively large computations. For instance, an Exhaustive Search approach requires a total of $2^N - 1$ computations. This method could work when the number of data centers is small (i.e., $N \leq 20$). 

However, with technological advancements, data collection capabilities have increased, necessitating larger models for training and leading to the establishment of more data centers. Consequently, methods like Exhaustive Search become impractical. To tackle this challenge, we begin by showing that the objective function of \textbf{P2} is a submodular function. The first component of the objective function is a widely recognized submodular function, specifically, the facility location function \cite{CORNUEJOLS1977163}. The second component of the objective function is a linear function, which does not influence the overall submodularity of the function. Therefore, we can conclude that the objective function of \textbf{P2} is a submodular function.

In the objective function of \textbf{P2}, an additional term is incorporated to represent a negative carbon cost. This introduces a challenge in ensuring that the inclusion of an extra data center consistently yields a positive impact on the objective function. Furthermore, within this context, there exists no limitations on the number of data centers that can be chosen during each time slot. Consequently, the problem at hand is categorized as an unconstrained non-monotone submodular maximization problem. It is worth highlighting that our analysis assumes the objective function value in \textbf{P2} remains non-negative. This type of problem is well-recognized within the field and is acknowledged as NP-hard.

The aforementioned unconstrained non-monotone submodular maximization problem has undergone extensive investigation in recent years. Numerous algorithms \cite{zhang2023online, feige2011maximizing, lee2009non, buchbinder2015tight} have been developed to address this problem, often achieving a $1/\gamma$-approximation, as defined in Definition \ref{gamma_optimal} for optimal assurance.

\begin{definition}
\label{gamma_optimal}
    Let $\Bar{\boldsymbol{a}}^{t} $ represent the optimal solution for \textbf{P2} during time slot $t $. The solution generated by existing non-monotone submodular algorithms (denoted as $\boldsymbol{a}^{*,t} $) provides a $\frac{1}{\gamma} $-approximation if:
    $g(\boldsymbol{a}^{*,t}) \geq \frac{1}{\gamma} g(\Bar{\boldsymbol{a}}^{t} )$, where $g(.)$ denotes the objective function of \textbf{P2} and $\gamma \geq 1$.
\end{definition}

Typically, the Exhaustive Search approach can indeed achieve the optimal solution for \textbf{P2}, resulting in $\gamma = 1$ for this method. The degree of approximation is contingent upon the algorithm employed for resolution. Here, we introduce an algorithm proposed by \cite{buchbinder2015tight} allowing us to tackle the aforementioned submodular problem with a $1/3$-approximation.

% We proceed by \textbf{randomly} shuffling the entire set of data centers, $\mathcal{N} $, to the shuffled set denoted as $\Tilde{\mathcal{N}} $. The randomized double greedy algorithm then sequentially operates on this shuffled set.

\textbf{Deterministic Double Greedy Algorithm}: This method is proposed by \cite{buchbinder2015tight}. We modify it to align with our setting.  For each specific time slot $t $, two vectors, $\boldsymbol{a}^e $ and $\boldsymbol{a}^f $, are defined. Initially, $\boldsymbol{a}^e = (a^t_1 = 0, \dots, a^t_N = 0) $, meaning \textbf{no} data center is selected, and $\boldsymbol{a}^f = (a^t_1 = 1, \dots, a^t_N = 1) $, implying \textbf{all} data centers are selected. The deterministic double greedy algorithm sequentially operates on the data centers. Within each data center $j \in \mathcal{N} $, we compute two specific values. The first, denoted as $u_j $, evaluates the difference in objective value when virtually setting $a^t_j = 1 $ in $\boldsymbol{a}^e $:
\begin{align}
    u_j = g(\boldsymbol{a}^e_{-j}, a_j^e = 1) - g(\boldsymbol{a}^e) 
\end{align}
Here, $g(.) $ symbolizes the objective function of \textbf{P2}, and $\boldsymbol{a}^e_{-j} $ represents the selection of all data centers excluding $j $ in $\boldsymbol{a}^e $.

The second value, $v_j $, measures the difference in objective value when setting $a^t_j = 0 $ in $\boldsymbol{a}^f $:
\begin{align}
    v_j = g(\boldsymbol{a}^f_{-j}, a_j^f = 0) - g(\boldsymbol{a}^f) 
\end{align}

Next, we proceed to compare the values of $v_j$ and $u_j$. In the case where $u_j$ is greater than or equal to $v_j$, the subsequent operations are carried out as follows:
\begin{align}
\label{ugeater}
    &\boldsymbol{a}^e \gets (\boldsymbol{a}^e_{-j}, a_j^e = 1);  \quad
    \boldsymbol{a}^f \gets \boldsymbol{a}^f
\end{align}
Conversely, if $u_j$ is less than $v_j$, the following adjustments are made:
\begin{align}
\label{vgeater}
    &\boldsymbol{a}^e \gets \boldsymbol{a}^e;  \quad
    \boldsymbol{a}^f \gets (\boldsymbol{a}^f_{-j}, a_j^f = 0)
\end{align}

% We further define $v_j^+ = \max\{v_j, 0\} $ and $u_j^+ = \max\{u_j, 0\} $. Based on these two non-negative numbers, with a probability $p = \frac{u_j^+}{u_j^+ + v_j^+} $, the following operations are executed:
% \begin{align}
%     &\boldsymbol{a}^e = (\boldsymbol{a}^e_{-j}, a_j^e = 1)\nonumber\\
%     &\boldsymbol{a}^f = \boldsymbol{a}^f
% \end{align}
% With the complementary probability, $1 - p $, we undertake:
% \begin{align}
%     &\boldsymbol{a}^e = \boldsymbol{a}^e \nonumber\\
%     &\boldsymbol{a}^f = (\boldsymbol{a}^f_{-j}, a_j^f = 0)
% \end{align}

Upon completion of all $N $ iterations over elements in the data centers set, $\boldsymbol{a}^{*,t} $, is obtained through $\boldsymbol{a}^{*,t} = \boldsymbol{a}^e = \boldsymbol{a}^f $.

\textbf{Randomized Double Greedy Algorithm:} The aforementioned deterministic algorithm ensures a minimum of a $1/3$-approximation to the optimal solution. However, by introducing randomization into the algorithm, significant improvements can be achieved. The enhanced randomized algorithm can attain a considerably tighter at least $ 1/2$ approximation, considering the expected outcomes resulting from the algorithm's random choices.

The fundamental process of the randomized algorithm resembles that of the deterministic version. Nonetheless, certain modifications are incorporated to incorporate randomization. After calculating the values $u_j$ and $v_j$, two new non-negative values are defined: $v_j^+ = \max\{v_j, 0\}$ and $u_j^+ = \max\{u_j, 0\}$. Leveraging these non-negative values, the algorithm operates as follows: with a probability of $p = \frac{u_j^+}{u_j^+ + v_j^+}$, the operations outlined in Eq. \eqref{ugeater} are executed. Conversely, with a complementary probability of $1 - p$, the operations detailed in Eq. \eqref{vgeater} are undertaken.

\vspace{-10pt}
\begin{algorithm}
\caption{CAFE}
\begin{algorithmic}[1]
\For{$t = 1$ to $T$}
    \State {broadcast} global model $w^t$ to all data centers $i \in \mathcal{N}$
    \State \textbf{1. Probing step (All data centers):}
    \For{each data center $i \in \mathcal{N}$}
        \State Estimate $\nabla f_i(w^t)$ based on $\epsilon \%$ of its total data samples.
        \State Communicate $\nabla f_i(w^t)$ to the `server' data center.
    \EndFor
    \State \textbf{2. Selecting data center step (Server):}
    \State Solve problem \textbf{P2} based on the $\nabla f_i(w^t)$ and \par each data center's current carbon intensity $\beta^t_i$.
    \State Use the solution of \textbf{P2} to determine the selected \par data center set $\mathcal{K}^t$.
    \State \textbf{3. Local training (Selected data centers) and aggregating step (Server):}
    \For{local epoch $m = 1$ to $M$}
     \For{each data center $i \in \mathcal{K}^t$}
        \State Perform the local training.
    \EndFor
    \State Aggregate the local updates and broadcast to the selected data centers.
    \EndFor
\EndFor
\State Update the global model $w^{t+1}$.
\end{algorithmic}
\label{alg}
\end{algorithm}
\vspace{-10 pt}
By solving problem \textbf{P2} using either method, we can now determine the data center selection for each time slot in our proposed method, CAFE. We summarize CAFE in Algorithm \ref{alg}.

% \begin{proposition}
%     Assuming $\Bar{\boldsymbol{a}}^{t} $ is the optimal solution for \textbf{P2} at time slot $t $, the solution acquired by the randomized double greedy algorithm (i.e., $\boldsymbol{a}^{*,t} $) offers a $\frac{1}{2} $-approximation:
%     \begin{align}
%         \mathbb{E}(g(\boldsymbol{a}^{*,t})) \geq \frac{1}{2} g(\Bar{\boldsymbol{a}}^{t} ),
%     \end{align}
%     where $g(.)$ is the objective function of \textbf{P2} and expectations are taken over the random choices of the algorithm.
% \end{proposition}

\subsection{Performance Analysis}
Now, we discuss the optimality and constraint satisfaction achieved when we solve \textbf{P1} approximately by minimizing the drift-plus-penalty objective of \textbf{P2}. To streamline our analysis, we introduce the following assumptions.

\begin{assumption}
\label{ass1}
For each training time slot $t$, the total footprint created by constant energy consumption is smaller than $H/T$.
\begin{align}
     \sum_{i \in \mathcal{N}} \beta_i^t E_i^c - H/T \leq 0, ~~~ \forall t
\end{align}
\end{assumption}

\begin{assumption}
\label{ass2}
There exits a constant $G > 0$ such that $\|\nabla f_j (w)\| < G, ~~\forall j \in \mathcal{N}, \forall w$.
\end{assumption}

% \begin{assumption}
%     There exits a constant $\delta > 0$, such that with the same selection decision $\boldsymbol{a^t}$, the absolute difference between the utility value based different model parameters $w_1$ and $w_2$ is bounded.
% \label{ass3}
% \begin{align}
%     |U^t(\boldsymbol{a^t}|w_1) - U^t(\boldsymbol{a^t}|w_2)| < \delta
% \end{align}
% \end{assumption}

\begin{assumption}
    There exits a constant $\delta > 0$, such that the gradient divergence is bounded as:
    \begin{align}
        \|\nabla f_i (w) - \nabla f(w)\| \leq \delta, \forall w, i
    \end{align}
\end{assumption}

% \begin{assumption}
% \label{ass3}
% There exist a constant $\delta > 0$, such that for any two data centers $i,j \in \mathcal{N}$, with different model parameters $w_1$ and $w_2$, it has the following bound:
% \begin{align}
%    | \|\nabla f_j(w_1) - \nabla f_i(w_1)\| - \|\nabla f_j(w_2) - \nabla f_i(w_2)\| | \leq \delta
% \end{align}
% \end{assumption}

Assumption 1 ensures that the total carbon footprint budget, represented by $ H $, is large enough to cover the carbon emissions produced by the static energy of all data centers when they are not processing any workload. Assumptions 2 and 3 are frequently employed in the convergence analysis of distributed learning, as evidenced by references such as \cite{li2019convergence, yu2019parallel, yang2021achieving}. Specifically, Assumption 2 asserts that the gradient remains uniformly bounded, a condition crucial for ensuring stability and consistent control throughout the optimization phase. On the other hand, Assumption 3 pertains to the extent of non-IID data distribution across various data centers.
% \begin{assumption}
% There exits a constant $\delta > 0$, such that with the same selection decision $\boldsymbol{a^t}$, the difference between the expected utility value caused by different model parameters $w$ is bounded.
% \label{ass3}
% \begin{align}
%     \mathbb{E}[U^t(\boldsymbol{a^t})| w_1] - \mathbb{E}[U^t(\boldsymbol{a^t})| w_2] < \delta
% \end{align}
% \end{assumption}

\begin{proposition}
\label{prop3}
    With the same selection decision $\boldsymbol{a^t}$, the absolute difference between the utility value based on different model parameters $w_1$ and $w_2$ is bounded as follows:
    \begin{align}
        |U^t(\boldsymbol{a^t}|w_1) - U^t(\boldsymbol{a^t}|w_2)| \leq 4 N \delta
    \end{align}
\end{proposition}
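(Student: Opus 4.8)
The plan is to exploit the fact that the additive constant $b$ is identical in $U^t(\boldsymbol{a}^t|w_1)$ and $U^t(\boldsymbol{a}^t|w_2)$, so it cancels in the difference. Writing out Eq.~\eqref{metric} for both models gives $U^t(\boldsymbol{a}^t|w_1) - U^t(\boldsymbol{a}^t|w_2) = \sum_{j \in \mathcal{N}}\left[\min_{i \in \mathcal{K}^t}\|\nabla f_j(w_2) - \nabla f_i(w_2)\| - \min_{i \in \mathcal{K}^t}\|\nabla f_j(w_1) - \nabla f_i(w_1)\|\right]$. Taking absolute values and applying the triangle inequality over the sum, I would reduce the claim to controlling, for each $j$, the single term $\left|\min_{i \in \mathcal{K}^t}\|\nabla f_j(w_2) - \nabla f_i(w_2)\| - \min_{i \in \mathcal{K}^t}\|\nabla f_j(w_1) - \nabla f_i(w_1)\|\right|$ and then summing the $N$ resulting bounds.

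The one ingredient I would establish is a uniform pointwise bound on each coreset distance: for any model $w$, any nonempty $\mathcal{K}^t$, and any $j$, I claim $\min_{i \in \mathcal{K}^t}\|\nabla f_j(w) - \nabla f_i(w)\| \le 2\delta$. To see this, fix any $i_0 \in \mathcal{K}^t$ and route through the global gradient $\nabla f(w) = \frac{1}{N}\sum_{i}\nabla f_i(w)$: by the triangle inequality and Assumption~3, $\|\nabla f_j(w) - \nabla f_{i_0}(w)\| \le \|\nabla f_j(w) - \nabla f(w)\| + \|\nabla f(w) - \nabla f_{i_0}(w)\| \le 2\delta$, and the minimum over $i$ is no larger. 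Since each of the two minima in a given term is a nonnegative number bounded by $2\delta$, their difference is bounded in absolute value by their sum, i.e.\ by $4\delta$. Summing over the $N$ values of $j$ then yields exactly the stated $4N\delta$.

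The step that requires care — and the reason I deliberately avoid a term-by-term Lipschitz comparison — is that the minimizing index $i$ can differ between $w_1$ and $w_2$, precisely as the paragraph following Eq.~\eqref{metric} warns. This rules out pairing matched summands and differencing them; instead I bound each minimum separately by its worst case $2\delta$, which sidesteps the changing-argmin issue at the cost of the looser constant $4\delta$ per term (a matched comparison would give $2\delta$, but only when a common argmin is available). I would also flag that the computation above presumes $\mathcal{K}^t$ nonempty, which is the operative regime here; the all-idle configuration is instead governed by the separate definitional value $U^t = b - 2N\max_{i \in \mathcal{N}}\|\nabla f_i(w)\|$, whose variation across $w_1,w_2$ is controlled through the gradient boundedness of Assumption~2 rather than the divergence bound of Assumption~3.
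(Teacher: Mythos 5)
Your proof is correct and follows essentially the same route as the paper's: the constant $b$ cancels, the absolute difference of the two nonnegative coreset sums is bounded by their sum, and each minimum $\min_{i \in \mathcal{K}^t}\|\nabla f_j(w) - \nabla f_i(w)\|$ is bounded by $2\delta$ by inserting the global gradient $\nabla f(w)$ and applying the triangle inequality together with Assumption~3, yielding $2N\delta + 2N\delta = 4N\delta$. Your explicit flagging of the empty-selection case (which the paper's proof silently ignores) and of the changing-argmin issue are minor points of added care, not a different argument.
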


% \begin{proof}
%     We start with the definition of utility defined in Eq. \ref{metric}.
%     \begin{align}
%          & |U^t(\boldsymbol{a^t}|w_1) - U^t(\boldsymbol{a^t}|w_2)| \nonumber\\
%          = &  |\sum_{j \in \mathcal{N}} \min_{i \in \mathcal{K}^t} \|\nabla f_j(w_2) - \nabla f_i(w_2)\| -  \|\nabla f_j(w_1) - \nabla f_i(w_1)\|| \nonumber\\
%          \leq &  \sum_{j \in \mathcal{N}} \min_{i \in \mathcal{K}^t} \|\nabla f_j(w_2) - \nabla f_i(w_2)\|  +  \|\nabla f_j(w_1) - \nabla f_i(w_1)\| \nonumber\\
%          = &  \sum_{j \in \mathcal{N}} \min_{i \in \mathcal{K}^t}   \|\nabla f_j(w_2) - \nabla f(w_2) + \nabla f(w_2) - \nabla f_i(w_2)\| \nonumber\\
%           +&   \sum_{j \in \mathcal{N}} \min_{i \in \mathcal{K}^t}   \|\nabla f_j(w_1) - \nabla f(w_1) + \nabla f(w_1) - \nabla f_i(w_1)\|  \nonumber\\
%           \leq &  \sum_{j \in \mathcal{N}} \min_{i \in \mathcal{K}^t}  ( \|\nabla f_j(w_2) - \nabla f(w_2)\| + \|\nabla f(w_2) - \nabla f_i(w_2)\|) \nonumber\\
%           +&   \sum_{j \in \mathcal{N}} \min_{i \in \mathcal{K}^t}   (\|\nabla f_j(w_1) - \nabla f(w_1)\| + \|\nabla f(w_1) - \nabla f_i(w_1)\| )\nonumber\\
%           \leq &  \sum_{j \in \mathcal{N}} 2\delta + \sum_{j \in \mathcal{N}} 2\delta = 4 N \delta
%     \end{align}
%     where the first and second inequality is based on triangle inequality and the last inequality is based on Assumption 3.
% \end{proof}
\begin{proof}
The proof is shown in Appendix [\ref{proof_p1}].
\end{proof}

\begin{theorem}
Under Assumption \ref{ass1}, \ref{ass2}, solving the per-slot problem \textbf{P2} with $1/\gamma$-approximation in each time slot $t$ ensures the following bound on the constraint violation:
\begin{align}
    &\frac{1}{T}\sum_{t=0}^{T-1} c^t - \frac{H}{T} 
    \leq   \sqrt{\frac{(q^0)^2}{T^2} + \frac{\frac{2V}{\gamma}(b + N G) + 2B_1}{T}} - \frac{q^0}{T},
\end{align}
where $B_1 > 0$ is a finite constant number.
\label{thm1}
\end{theorem}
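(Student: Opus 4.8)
The plan is to run a finite-horizon Lyapunov drift-plus-penalty argument on the carbon-deficit queue $q^t$ and then convert the resulting queue bound into the stated constraint-violation bound. I take the quadratic Lyapunov function $L(q^t) = \tfrac{1}{2}(q^t)^2$ and analyze the one-slot drift $\tfrac{1}{2}[(q^{t+1})^2 - (q^t)^2]$, the goal being the per-slot estimate $\tfrac{1}{2}[(q^{t+1})^2 - (q^t)^2] \le \tfrac{V}{\gamma}(b+NG) + B_1$, from which the theorem follows by summation and a square root.

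First I would bound the drift using the queue recursion. Since dropping the $\max$ in $q^{t+1} = \max\{0, q^t + c^t - H/T\}$ only increases the square, we have $(q^{t+1})^2 \le (q^t + c^t - H/T)^2$, and expanding gives
\[
\frac{1}{2}\left[(q^{t+1})^2 - (q^t)^2\right] \le \frac{1}{2}\left(c^t - \frac{H}{T}\right)^2 + q^t\left(c^t - \frac{H}{T}\right).
\]
The second-order term is uniformly bounded because $E_i^c, E_i^s$ and $\beta_i^t$ are bounded and $H/T$ is fixed, so I absorb its supremum over $t$ into the finite constant $B_1$.

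The crucial step is bounding the first-order term $q^t(c^t - H/T) = q^t c^t - q^t H/T$ through the per-slot optimization. Because $\boldsymbol{a}^{*,t}$ is a $1/\gamma$-approximate maximizer and the objective $g$ is assumed non-negative, $g(\boldsymbol{a}^{*,t}) = V\,U^t(\boldsymbol{a}^{*,t}) - q^t c^t \ge \tfrac{1}{\gamma}g(\bar{\boldsymbol{a}}^t) \ge 0$, which already yields $q^t c^t \le V\,U^t(\boldsymbol{a}^{*,t})$. To surface the factor $\tfrac{1}{\gamma}$ and the constant $b + NG$, I compare against the no-selection policy $\boldsymbol{a} = \boldsymbol{0}$, which is feasible per slot by Assumption~\ref{ass1} since $c^t(\boldsymbol{0}) = \sum_i \beta_i^t E_i^c \le H/T$; chaining $g(\bar{\boldsymbol{a}}^t) \ge g(\boldsymbol{0})$ with the utility range $b - 2NG \le U^t \le b$ implied by Assumption~\ref{ass2} (so that $\min_{i}\|\nabla f_j(w) - \nabla f_i(w)\| \le 2G$) bounds $q^t c^t - q^t H/T$ by $\tfrac{V}{\gamma}(b+NG)$ after discarding the non-positive term $-q^t(H/T)(1 - 1/\gamma)$, which is valid because $\gamma \ge 1$. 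Combining with $B_1$ gives the per-slot drift bound, and telescoping from $t=0$ to $T-1$ produces
\[
(q^T)^2 \le (q^0)^2 + \left[\frac{2V}{\gamma}(b+NG) + 2B_1\right]T.
\]
Separately, $c^t - H/T \le q^{t+1} - q^t$, whose sum telescopes to $\sum_t(c^t - H/T) \le q^T - q^0$; dividing by $T$, substituting $q^T \le \sqrt{(q^0)^2 + [\,\cdot\,]T}$, and rearranging yields the claimed bound.

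The main obstacle is the third paragraph's estimate of $q^t(c^t - H/T)$. The non-monotone submodular structure forces the argument to go through the $1/\gamma$-approximation and the non-negativity assumption rather than exact per-slot optimality, and, more importantly, $U^t$ depends on the model $w^t$, which is itself shaped by past selection decisions (the feature emphasized as distinguishing this setting from textbook Lyapunov analysis). I expect that pinning the exact constant $b + NG$ will require a careful choice of the comparison policy together with the utility-range bounds from Assumption~\ref{ass2}, possibly with some slack absorbed into the stated constants.
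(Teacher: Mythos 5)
Your proposal follows essentially the same route as the paper's own proof: the same quadratic Lyapunov drift bound with the second-order term absorbed into $B_1$, the same key step of invoking the $1/\gamma$-approximation and chaining through the all-zeros (no-selection) policy whose feasibility is controlled by Assumption~\ref{ass1}, the same utility-range bound $b - 2NG \le U^t \le b$ from Assumption~\ref{ass2}, and the same telescoping plus square-root conversion of the queue bound into the constraint-violation bound. One remark: the chain you describe actually yields the per-slot constant $\tfrac{V}{\gamma}\bigl((\gamma-1)b + 2NG\bigr)$ rather than the $\tfrac{V}{\gamma}(b+NG)$ you state as the goal, but this same discrepancy exists between the paper's theorem statement and its own proof, so it is not a defect of your argument.
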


\begin{proof}
The proof is shown in Appendix [\ref{proof_t1}].
\end{proof}

\textbf{Remarks:} Theorem 1 provides insights into the behavior of the constraint in relation to system parameters, such as \(q^0\) and \(V\). As \(T\) approaches infinity, the budget constraint is satisfied. For a finite \(T\), the right-hand side (RHS) of the constraint decreases as \(q^0\), the initial queue length, increases. Specifically, as \(q^0\) approaches infinity, the RHS tends toward 0. This behavior is intuitive: a larger initial queue length causes the algorithm to impose a significant penalty for any violation of the budget constraint. Additionally, Theorem 1 highlights the impact of the parameter \(V\) on the budget's restriction. A larger \(V\) increases the likelihood of violations, consistent with the idea that a higher \(V\) indicates the algorithm's emphasis on maximizing the utility value \(U^t\). Consequently, the algorithm is more likely to disregard violations of the budget constraint.

\begin{theorem}
    Under Assumption \ref{ass1}, \ref{ass2} and Proposition \ref{prop3} solving the per-slot problem \textbf{P2} with $1/\gamma$-approximation in each time slot $t$ ensures the following bound related to the objective of \textbf{P1}
\begin{align}
& \frac{1}{T}\sum_{t=0}^{T-1}U^t(\boldsymbol{a^t}) \geq   \frac{\text{OPT}}{\gamma}-\frac{4}{\gamma}N\delta- \frac{1}{V}(\frac{q^0c^{max}}{\gamma} + \frac{1}{2T}(q^0)^2) \nonumber\\
&- \frac{1}{V}(\frac{(T c^{max} - H)c^{max}}{\gamma} + B_1)
\end{align}
where $\text{OPT}$ denotes the offline optimal value with fully-participated model sequence and all-time slots' carbon intensity information. However, we note that the above sequence and information are not achievable in practice. 
\label{thm2}
\end{theorem}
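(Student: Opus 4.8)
The plan is to run the standard Lyapunov drift-plus-penalty argument, but carefully account for the fact that our per-slot objective uses the \emph{actual} model $w^t$ (which depends on past selections) while $\text{OPT}$ is defined with respect to the \emph{virtual} fully-participated sequence $\{w^{full,t}\}$. First I would define the quadratic Lyapunov function $L(q^t) = \tfrac{1}{2}(q^t)^2$ and its one-slot drift $\Delta^t = L(q^{t+1}) - L(q^t)$. Using the queue update \eqref{queue} and the elementary inequality $(\max\{0,x\})^2 \le x^2$, I would bound the drift by $\Delta^t \le \tfrac{1}{2}(c^t - H/T)^2 + q^t(c^t - H/T)$. Since energies and carbon intensities are bounded, $(c^t-H/T)^2$ is uniformly bounded by some finite constant absorbed into $B_1$, giving the drift-plus-penalty bound $\Delta^t - V\,U^t(\boldsymbol{a}^t) \le B_1' + q^t c^t - q^t H/T - V\,U^t(\boldsymbol{a}^t)$.

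The crux is comparing the per-slot objective our algorithm achieves against what the offline optimum achieves. Because our double-greedy routine returns $\boldsymbol{a}^{*,t}$ that is a $1/\gamma$-approximation of the maximizer $\bar{\boldsymbol{a}}^t$ of $g(\cdot) = V\,U^t(\cdot) - q^t c^t$, I would write $g(\boldsymbol{a}^{*,t}) \ge \tfrac{1}{\gamma} g(\bar{\boldsymbol{a}}^t) \ge \tfrac{1}{\gamma} g(\boldsymbol{a}^{opt,t})$, where the second inequality uses that $\bar{\boldsymbol{a}}^t$ maximizes $g$ so it beats the offline decision $\boldsymbol{a}^{opt,t}$. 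This yields $V\,U^t(\boldsymbol{a}^{*,t}) - q^t c^t(\boldsymbol{a}^{*,t}) \ge \tfrac{1}{\gamma}\big(V\,U^t(\boldsymbol{a}^{opt,t}) - q^t c^t(\boldsymbol{a}^{opt,t})\big)$. Substituting this into the drift bound and rearranging, the $q^t c^t(\boldsymbol{a}^{*,t})$ terms combine favorably, and I would telescope $\sum_{t=0}^{T-1}\Delta^t = L(q^T) - L(q^0) \ge -\tfrac{1}{2}(q^0)^2$ across all slots.

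The main obstacle—and the part that distinguishes this from a textbook Lyapunov analysis—is that $U^t(\boldsymbol{a}^{opt,t})$ in the telescoped sum is evaluated at the virtual model $w^{full,t}$, whereas our algorithm's utility $U^t(\boldsymbol{a}^{*,t})$ is evaluated at the realized $w^t$. To close the gap I would invoke Proposition \ref{prop3}: for any fixed selection, switching the model from $w^t$ to $w^{full,t}$ changes the utility by at most $4N\delta$, so $\tfrac{1}{T}\sum_t U^t(\boldsymbol{a}^{opt,t}|w^{full,t}) \le \tfrac{1}{T}\sum_t U^t(\boldsymbol{a}^{opt,t}|w^t) + 4N\delta$, and $\tfrac{1}{T}\sum_t U^t(\boldsymbol{a}^{opt,t}|w^{full,t}) = \text{OPT}$ by definition of the offline benchmark. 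This is exactly where the $-\tfrac{4}{\gamma}N\delta$ correction term in the theorem originates, and it must be carried through the $1/\gamma$ factor.

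Finally I would collect terms. The residual $q^t c^t$ contributions that do not cancel are bounded using $q^t \le T c^{max}$ (a consequence of the queue growing by at most $c^{max} - H/T$ per slot) and $c^t \le c^{max}$, producing the $\tfrac{1}{V}\big(\tfrac{q^0 c^{max}}{\gamma} + \tfrac{(Tc^{max}-H)c^{max}}{\gamma}\big)$ terms; the $-\tfrac{1}{2T}(q^0)^2$ term comes from the $L(q^0)$ boundary of the telescoping sum, and $B_1$ absorbs the bounded drift constant. Dividing through by $VT$ and noting $b + NG$ bounds $U^t$ from above where needed, the pieces assemble into the stated lower bound on $\tfrac{1}{T}\sum_t U^t(\boldsymbol{a}^t)$. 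I expect the bookkeeping of which $c^{max}$-factors attach to $q^0$ versus to the accumulated queue to be the most error-prone step, so I would track the queue bound explicitly before combining.
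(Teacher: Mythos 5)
Your proposal follows essentially the same argument as the paper's proof: the drift-plus-penalty bound with the quadratic term absorbed into $B_1$, the per-slot $1/\gamma$-approximation compared against $\boldsymbol{a}^{opt,t}$ evaluated at the realized model, Proposition \ref{prop3} to bridge to the virtual fully-participated sequence (yielding the $\frac{4}{\gamma}N\delta$ term), the queue bound $q^t \leq q^0 + Tc^{max} - H$ with $|c^{opt,t} - H/T| \leq c^{max}$, and the telescoped drift contributing $-\frac{1}{2T}(q^0)^2$. The only stray remark is that the $b + NG$ upper bound on $U^t$ is not actually needed here (it belongs to Theorem \ref{thm1}), but this does not affect correctness.
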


\begin{proof}
The proof is shown in Appendix [\ref{proof_t2}].
\end{proof}

\textbf{Remark}: 
Theorem 2 elucidates the performance attributes of our algorithm. For the per-slot problem, denoted as \textbf{P2}, the solver guarantees a minimum of $\frac{1}{\gamma}$-approximation. Consequently, the long-term average expected value also aligns with this approximation. The optimality gap includes components $G_1$, $G_2$, and $G_3$, and is shaped by several factors. Given our emphasis on finite training time slots, mirroring real-world conditions, our analysis centers on $\delta$, $q^0$, and $V$, treating $T$ as a fixed constant. Specifically, $G_1$ is primarily influenced by $\delta$, denoting the Non-IID degree among data centers. A higher Non-IID degree expands the optimality gap, while a more IID data distribution narrows it. Components of $G_2$ are shaped by both $V$ and the initial queue length $q^0$, while $G_3$ is contingent on $V$. A larger $q^0$ accentuates the optimality gap, prompting a more conservative algorithmic approach to respect budget limits, potentially at the expense of performance. However, as $V$ grows, both $G_2$ and $G_3$ diminish, indicating the algorithm's shift towards prioritizing performance over strict carbon budget adherence.

\section{Simulation Results}
In this section, we evaluate our proposed online algorithm in comparison with several baseline methods and conduct ablation studies to assess the influence of different parameters.

\subsection{Setup}
\subsubsection{Data Centers}
We consider a collection of 30 geographically distributed data centers. We utilize the carbon intensity data from Electricity Maps. These data centers are extensive in scale, akin to significant establishments like Google's data centers. Each data center in our consideration possesses 2000 NVIDIA A100 GPUs dedicated to the training of large-scale models. The power utilization for each A100 GPU stands at 400 watts under full load conditions, while idling operation consumes approximately 20 watts, corresponding to periods when no active training tasks are being executed.

Given the contemporary trend of increasing size and intricacy in large models, we take into account a comprehensive training period of 10 days for mastering such a sizable model. Our temporal framework divides this span into discrete time slots, with each slot denoted as $t$ and lasting for 1 hour.

\subsubsection{Simulated Learning Tasks}
Our objective is to explore ways to allocate large model training tasks across data centers while adhering to a total carbon emission constraint. As delineated in the previous section, it is essential to monitor the learning performance for each time slot $ t $ as represented by Eq. \ref{metric}. Given the substantial resource requirements for training large models and our limited computational resources, we opt to use a smaller model task to emulate the training process of a more extensive model. Specifically, we are considering the classic classification training tasks on the CIFAR-10 and CIFAR-100 \cite{krizhevsky2009learning} datasets, employing ResNet \cite{he2016deep} as the backbone model. Our simulation experiments are based on one Geforce RTX 3080 GPU. To accurately simulate the scenario where data centers use A100 GPUs for large model training, we scale up the time required to train smaller models using 3080 GPU in our simulated learning environment. This scaled time serves as an approximation of the actual training time needed for large model training in data centers. While our proposed online learning method can be applied to large model training involving collaborative efforts across multiple data centers, our choice to use the smaller model here is merely to simulate learning performance due to computational constraints. 

\subsubsection{Default parameters}
In the default configuration, we assume that the data stored by each data center is non-IID. To simulate non-IID, we utilize the Dirichlet Distribution. This heterogeneity is quantified using a Dirichlet parameter set to 0.8. Each time slot is designated as a 1-hour interval, during which data centers execute $M = 2$ training epochs. We initially set the percentage of the small representative dataset $\epsilon = 0.05$ for training during the beginning of each time slot. The training time cost for this process is omitted for simplicity. Our experiment spans a total of $T = 200$ time slots and operates under a carbon footprint budget of $H = 400$ tons. For the Lyapunov control parameters, we set the Lyapunov parameter $V = 0.5$ and the initial queue value $q^0 = 10$. To ensure the reliability of our results, each simulation is conducted three times at random, and the average of these runs is taken as the final outcome.
\begin{figure}[h]
\centering
\vspace{-5 pt}
\subfigure[Convergence]{
  \includegraphics[width=0.475\linewidth, height=0.313\linewidth]{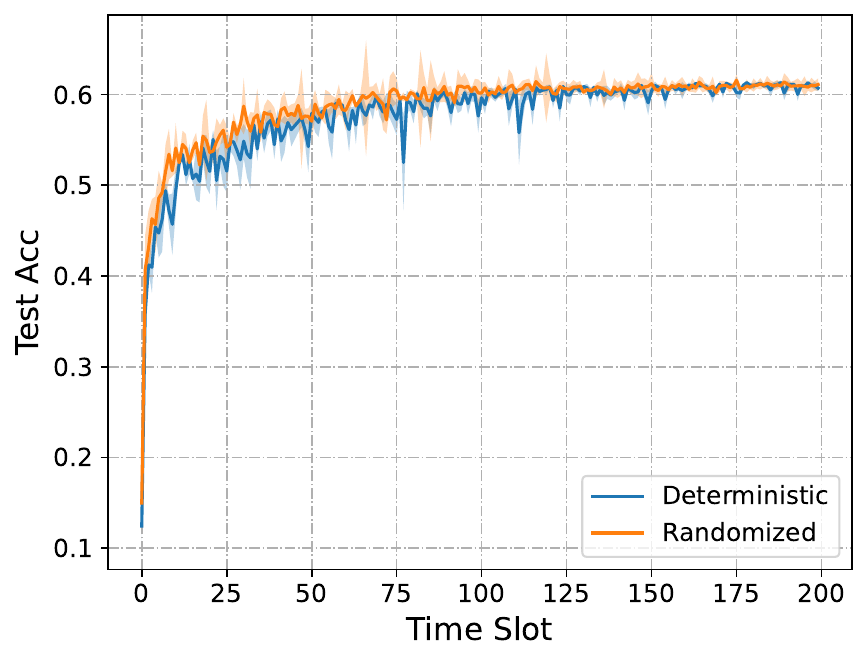}
  \label{fig:1sub1}}
\subfigure[Utility and Carbon Footprint]{
  \includegraphics[width=0.475\linewidth, height=0.313\linewidth]{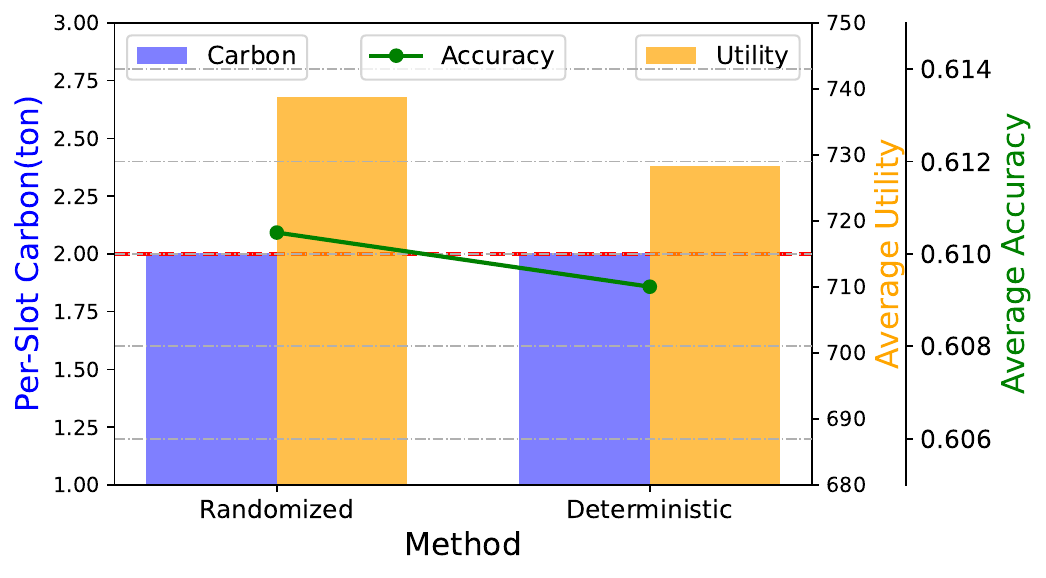}
  \label{fig:1sub2}}
  \vspace{-5 pt}
\caption{Impact of per-slot algorithms.}
\label{fig:per-slot}
\vspace{-20 pt}
\end{figure}

\begin{figure}[h]
\centering
\subfigure[Convergence]{
  \includegraphics[width=0.475\linewidth, height=0.313\linewidth]{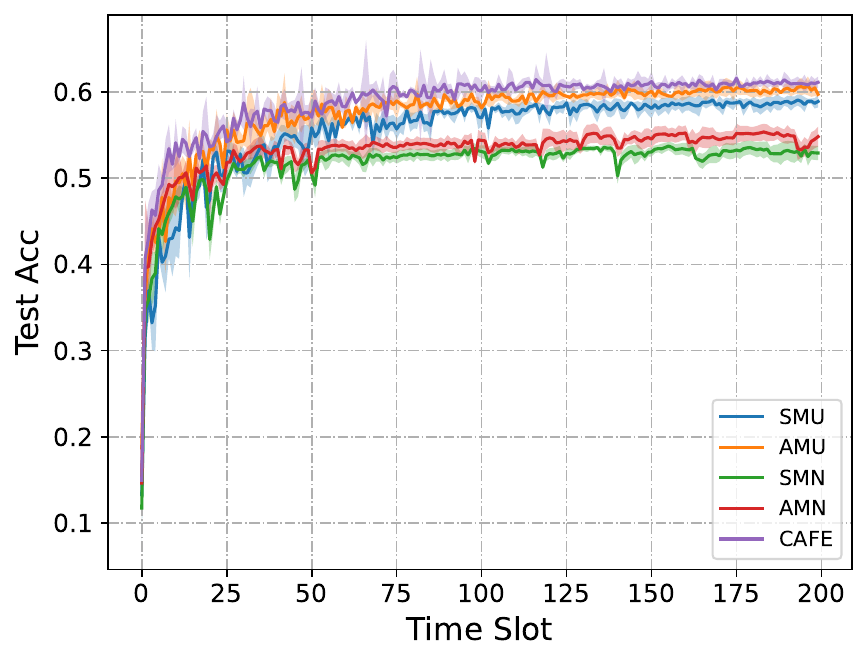}
  \label{fig:2sub1}}
\subfigure[Accuracy and Carbon Footprint]{
  \includegraphics[width=0.475\linewidth, height=0.313\linewidth]{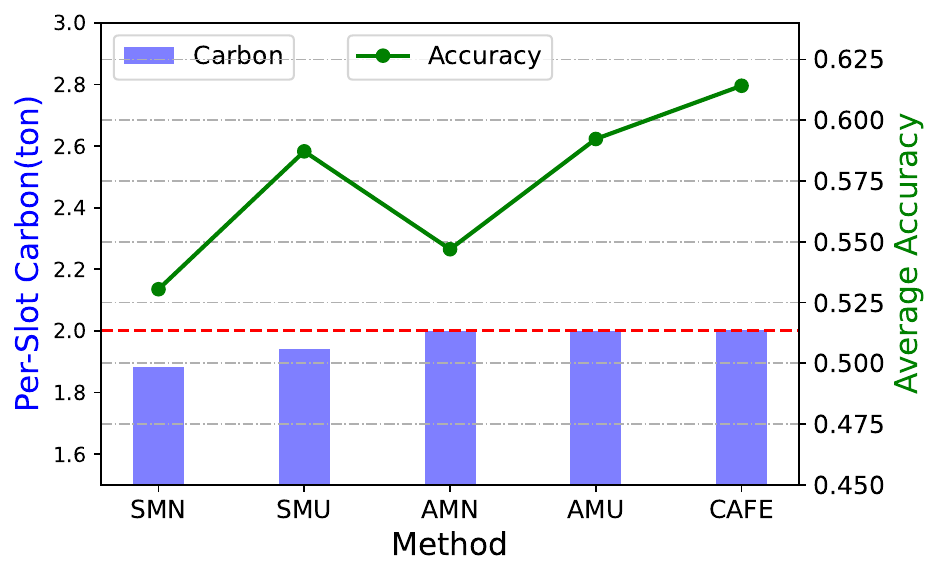}
  \label{fig:2sub2}}
  \vspace{-5 pt}
\caption{Performance Comparison on CIFAR-10.}
\label{fig:baseline}
\vspace{-15 pt}
\end{figure}

\subsubsection{Benchmarks}
We compare the performance of the proposed online algorithm with the following benchmark algorithms. 

% \textbf{Full Participate (FP):}
% In this approach, we assume that all 10 data centers participate in the training during each time slot. This method undoubtedly exceeds the carbon footprint budget $ H $. We use this approach as an upper-bound baseline for comparison.

\textbf{Static Myopic Utility (SMU):}
In this approach, we employ a uniform allocation strategy, dividing the total carbon budget equally across each time slot. The available carbon budget for each slot is denoted by $ H/T $. Following this, we maximize $U^t$ slot-by-slot, ensuring compliance with the budget constraint for each slot.

\textbf{Static Myopic Number (SMN):}
This approach is similar to the above SMU, with the key difference being that the per-slot objective focuses on maximizing the number of participating data centers, $ K^t $, rather than optimizing the utility $ U^t $.

\textbf{Adaptive Myopic Utility (AMU):} 
A limitation of the Static Myopic approach is the risk of budget inefficiency, as the assigned budget for each time slot might not be fully expended. The Adaptive Myopic strategy addresses this by redistributing any unspent budget uniformly across the subsequent time slots. In this method, the budget available during time slot $ t $ is calculated as $ (H - \tilde{H})/(T-t) $, where $ \tilde{H} $ indicates the cumulative carbon emission used up to that moment. The objective focuses on maximizing $U^t$.

\textbf{Adaptive Myopic Number (AMN):}
This approach is similar to AMU, with changing the objective to maximize the number of participating data centers $K^t$.

% When comparing our proposed online algorithm with benchmark algorithms, we employ Exhaustive Search across all methods to ensure a fair comparison. In subsequent sections, we will also evaluate the performance of our proposed algorithms using both Exhaustive Search and the Randomized Double Greedy method. The latter offers significant computational savings and is more practical when dealing with a large number of data centers.

\begin{figure}[h]
\centering
\subfigure[Convergence]{
  \includegraphics[width=0.475\linewidth, height=0.313\linewidth]{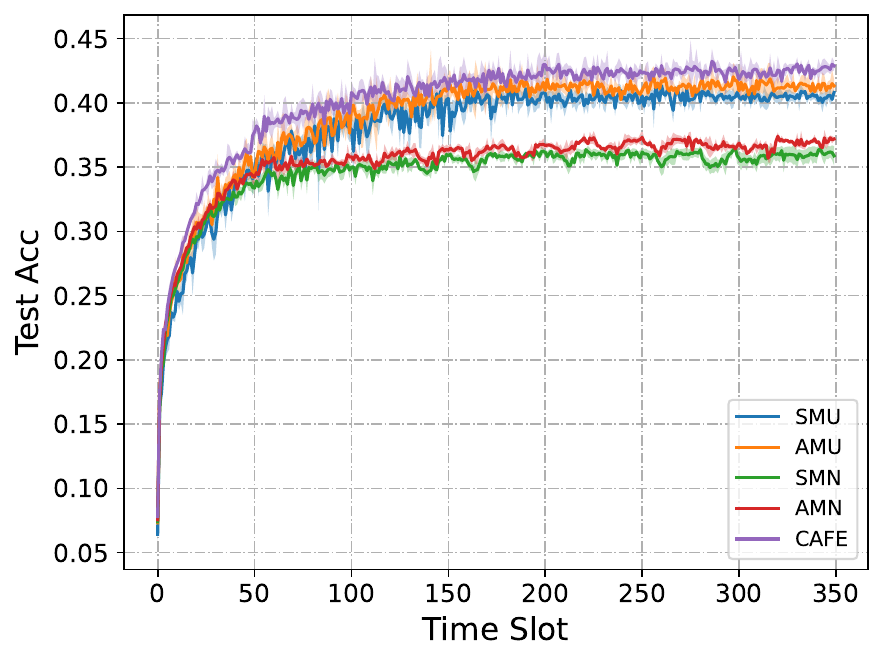}
  \label{fig:ciar100-sub1}}
\subfigure[Accuracy and Carbon Footprint]{
  \includegraphics[width=0.475\linewidth, height=0.313\linewidth]{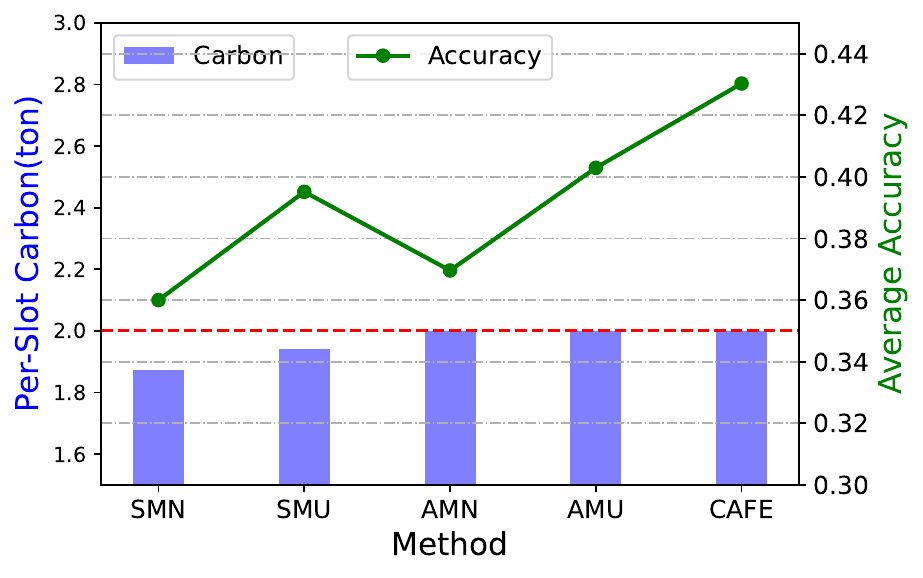}
  \label{fig:ciar100-sub2}}
  \vspace{-5 pt}
\caption{Performance Comparison on CIFAR-100.}
\label{fig:baseline_cifar100}
\vspace{-15 pt}
\end{figure}

\subsection{Simulation Results}
\subsubsection{Impact of Per-slot Algorithms}
Before comparing our approach to baseline methods, we first examine the effects of different per-slot algorithms in our proposed framework. Specifically, we consider two such algorithms:  Deterministic Double Greedy, and Randomized Double Greedy. Note that here Exhaustive Search does not work as we simulate 30 data centers. We evaluate their performance using four metrics: convergence speed, test accuracy (average over last 20 times slots) average utility (denoted as $\frac{1}{T}\sum_t^T U^t$), and average carbon footprint.

The results, as depicted in Fig. \ref{fig:per-slot}, reveal that Randomized Double Greedy slightly outperforms Deterministic Double Greedy. Therefore, we opt to use Randomized Double Greedy as the per-slot solver in our proposed method for subsequent simulations.
% \begin{figure}[h]
% \centering
% \subfigure[Convergence]{
%   \includegraphics[width=0.475\linewidth, height=0.313\linewidth]{figures/per_slot_slotion_convergence.pdf}
%   \label{fig:1sub1}}
% \subfigure[Utility and Carbon Footprint]{
%   \includegraphics[width=0.475\linewidth, height=0.313\linewidth]{figures/per_slot_slotion_bar.pdf}
%   \label{fig:1sub2}}
% \caption{Impact of per-slot algorithms.}
% \label{fig:per-slot}
% \end{figure}

\begin{figure}[h]
\centering
\vspace{-10pt}
\subfigure[Convergence]{
  \includegraphics[width=0.475\linewidth, height=0.313\linewidth]{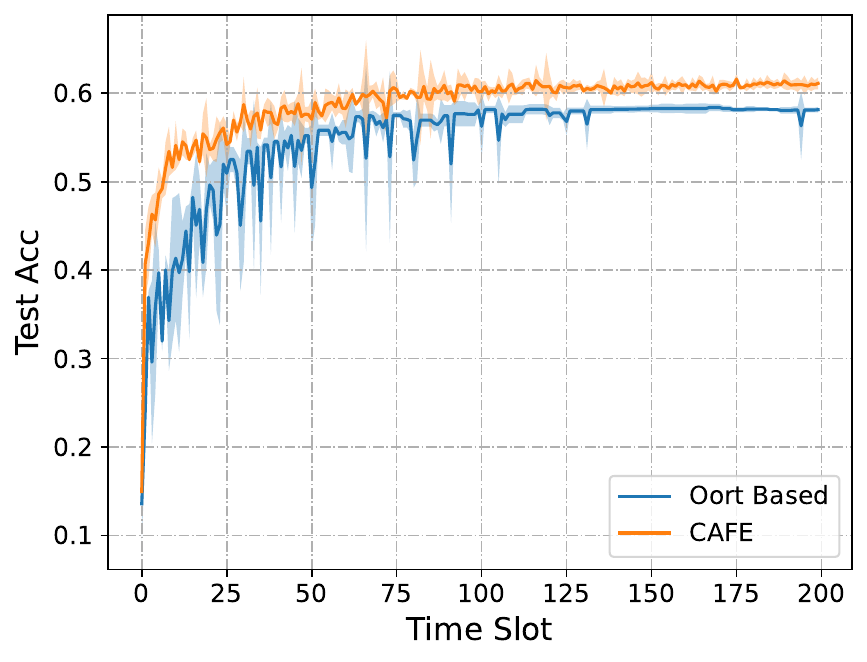}
  \label{fig:2sub3}}
\subfigure[Accuracy and Carbon Footprint]{
  \includegraphics[width=0.475\linewidth, height=0.313\linewidth]{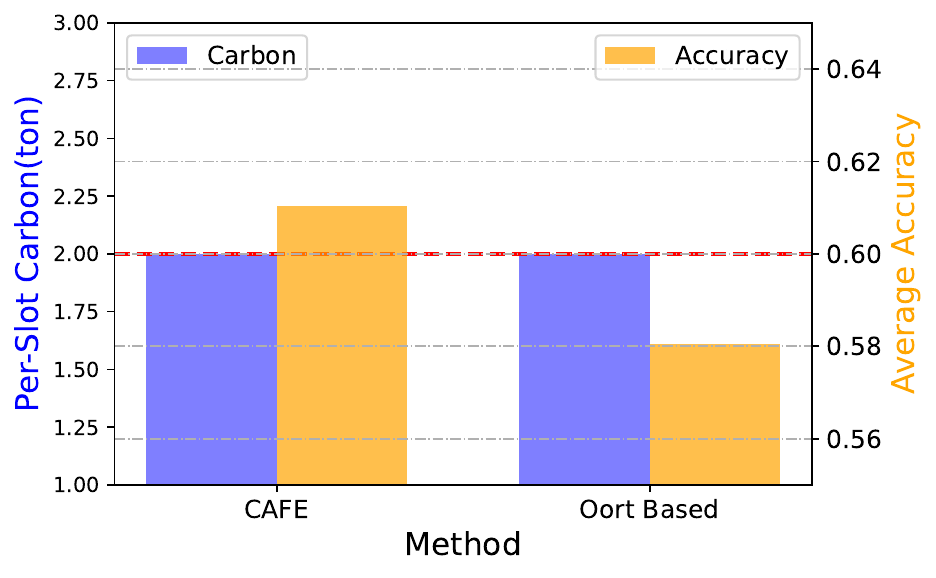}
  \label{fig:2sub4}}
\vspace{-5 pt}
\caption{Performance Comparison with Oort Based Method.}
\label{fig:baseline_loss}
\vspace{-10 pt}
\end{figure}

\begin{figure*}[tt]
	\centering
    \subfigure[Estonia]{
  \includegraphics[width=0.235\linewidth]{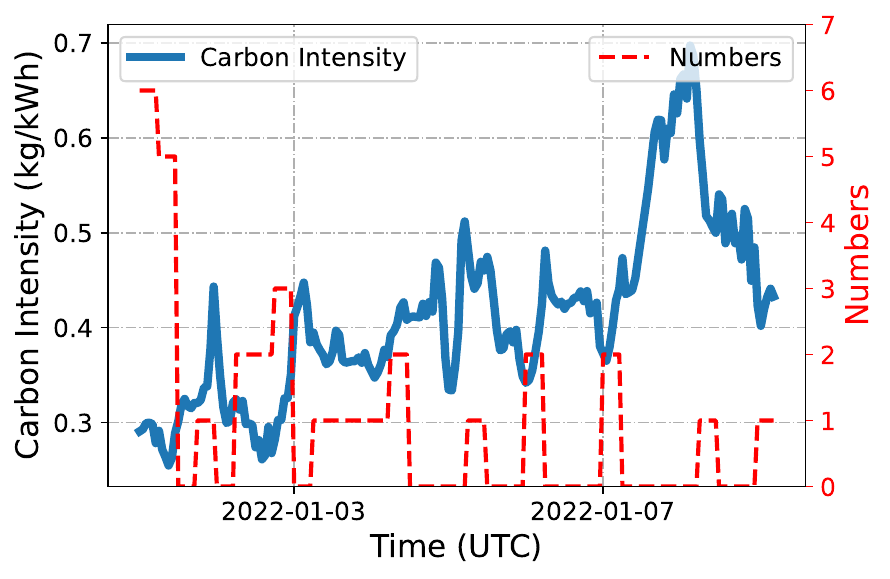}
  \label{fig:estonia}}
  \subfigure[Japan]{
  \includegraphics[width=0.235\linewidth]{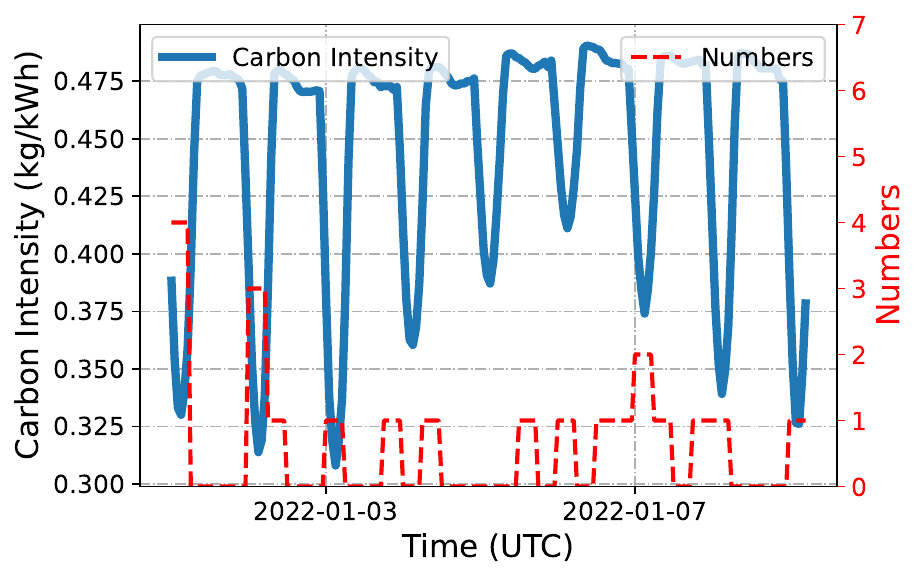}
  \label{fig:japan}}
    \subfigure[US-Texas]{
  \includegraphics[width=0.235\linewidth]{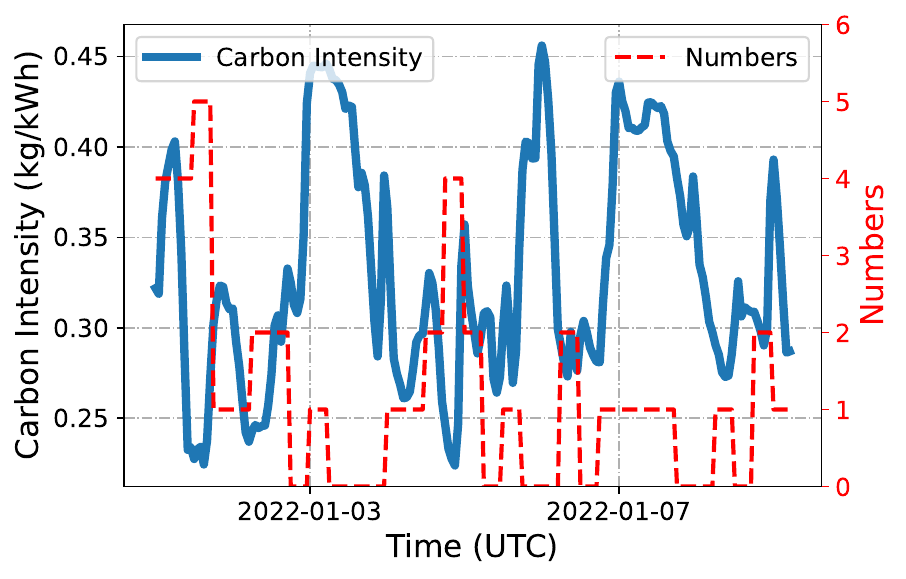}
  \label{fig:texas}}
    \subfigure[Turkey]{
  \includegraphics[width=0.235\linewidth]{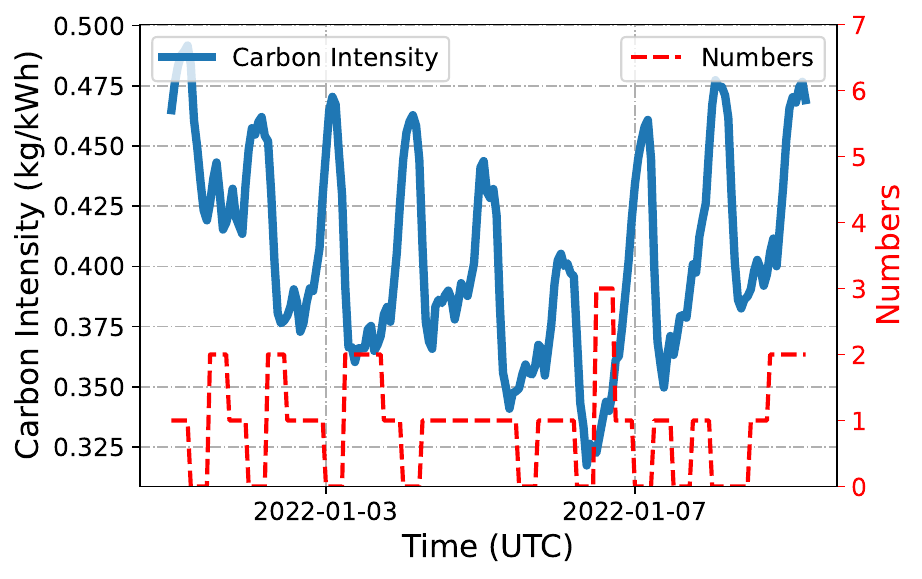}
  \label{fig:turkey}}
  \vspace{-10pt}
  \caption{Total Number of Times the Data Center is Selected in Each 6-hour Interval.}
    \label{fig:carbon_number}
    \vspace{-9pt}
\end{figure*}

\begin{figure*}[tt]
	\centering
    \begin{minipage}[b]{0.74\textwidth}
    \subfigure[Convergence]{
  \includegraphics[width=0.32\linewidth, height=0.21\linewidth]{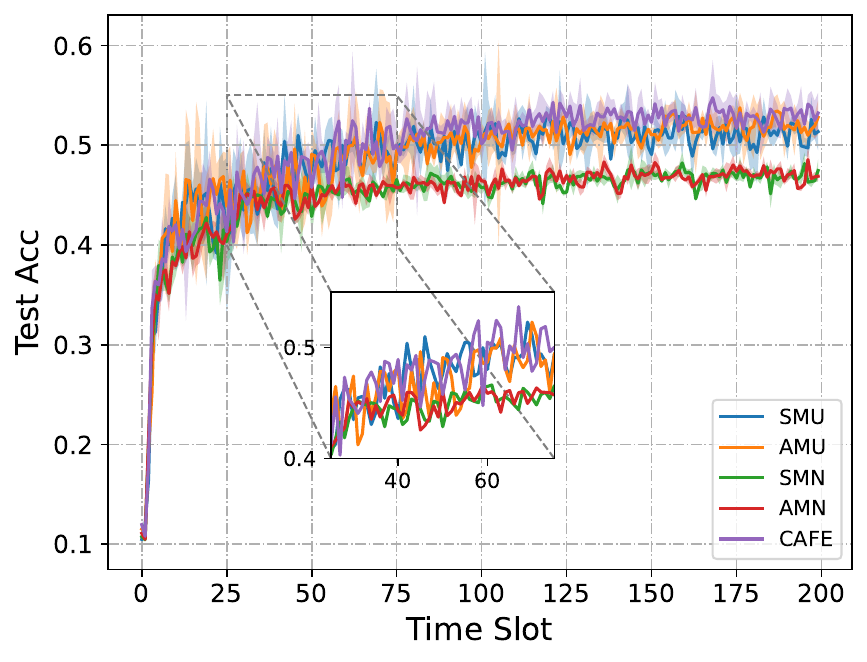}
  \label{fig:3sub1}}
    \subfigure[Utility and Carbon Footprint]{
  \includegraphics[width=0.32\linewidth, height=0.21\linewidth]{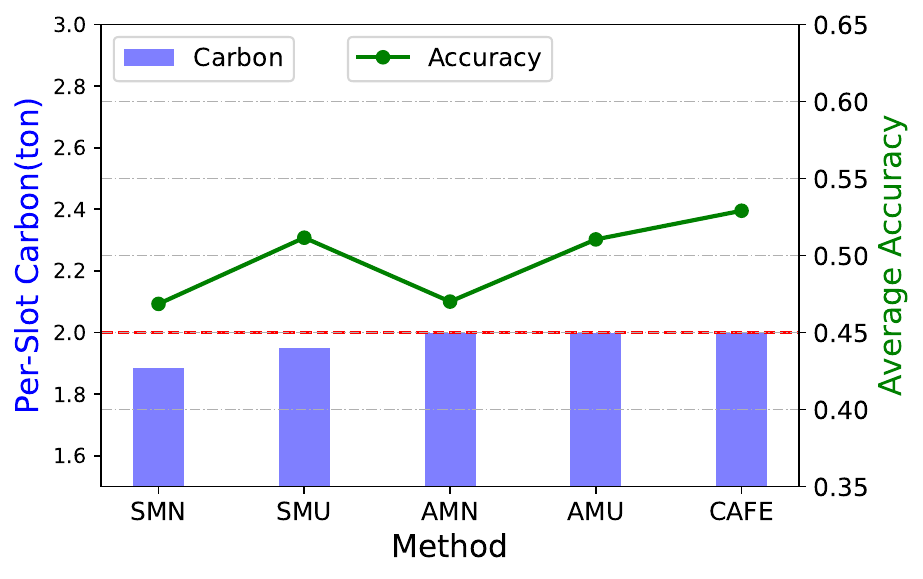}
  \label{fig:3sub2}}
	\subfigure[Participation Times]{
  \includegraphics[width=0.32\linewidth, height=0.21\linewidth]{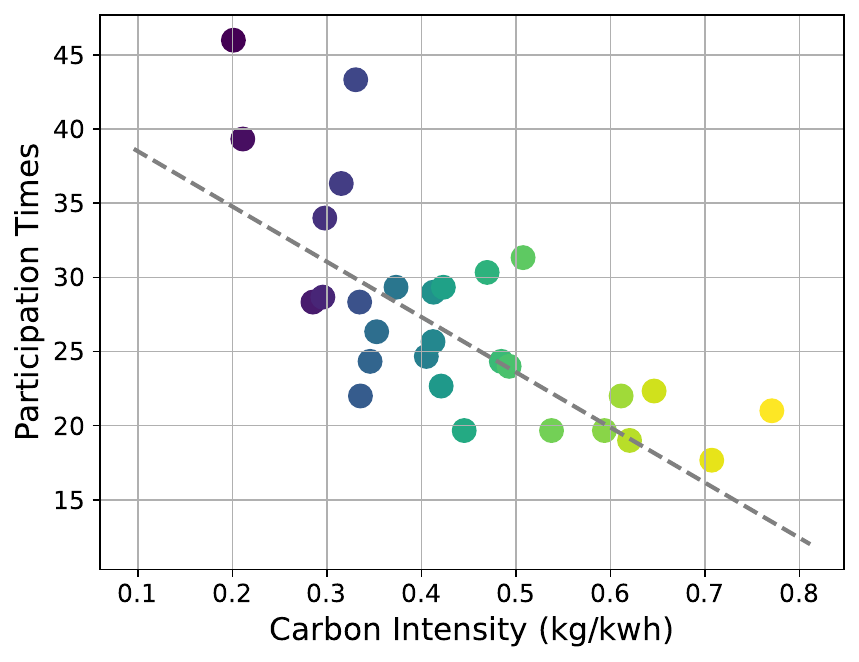}
  \label{fig:3sub3}}
   \vspace{-9pt}
  \caption{Comparsions under IID setting.}
   \vspace{-9pt}
    \label{fig:iid}
	\end{minipage}
	\begin{minipage}[b]{0.2368\textwidth}
	%\hspace{-2mm}
   	 	\includegraphics[width=0.95\linewidth, height=0.65\linewidth]{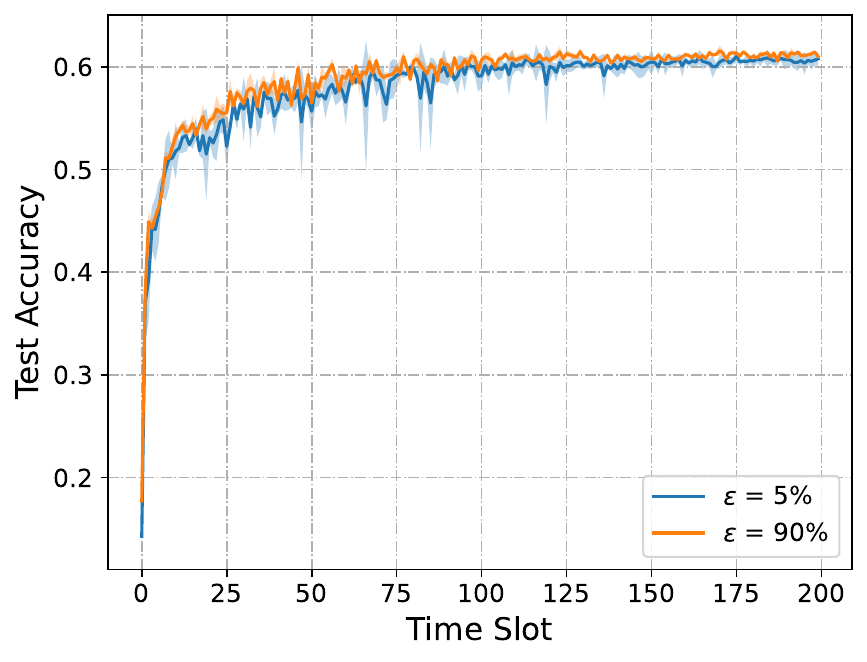} % Modified dimensions
       \vspace{9pt}
   	\caption{Impact of $\epsilon$.}
   	\label{fig:epsilon}
     \vspace{-9pt}
	\end{minipage}
\end{figure*}

\subsubsection{Performance Comparison}
We first compare the performance of our proposed algorithm with the baseline methods on the CIFAR-10 dataset, using default configurations. For the baseline algorithms, we utilize the Greedy Algorithm as the per-slot solver. However, our proposed method employs the Randomized Double Greedy approach. It is important to note that identifying an efficient solver for the baseline methods is not our primary objective in this study. Figure \ref{fig:baseline} offers a comparative analysis, emphasizing key metrics such as convergence performance, average test accuracy, and average carbon footprint. The red line in the figure represents the carbon footprint budget. Our findings indicate that our proposed method provides the best average test accuracy and remains within the carbon budget constraints. Further comparison with the baseline methods reveals a clear advantage for algorithms that use the utility (i.e., $ U^t $) as the objective function over those that focus on participant numbers (i.e., $ K^t $). This observation reaffirms our rationale for using $ U^t $ as a metric to quantify learning performance. Among the baseline methods, those employing a static myopic approach (SMU and SMN) fail to fully utilize the available carbon footprint budget, resulting in suboptimal performance. AMU, which uses utility as its objective and adaptively adjusts the per-slot budget, manages to use the entire carbon allocation while achieving test accuracy comparable to our method. However, as based on Figure \ref{fig:baseline}(a), AMU's convergence rate is slower. This drawback can be largely attributed to its conservative carbon emissions during the initial time slots, which adversely impact its convergence speed. Specifically, AMU's cautious strategy of selecting fewer data centers in the early stages leads to a markedly slower initial rate of convergence compared to our proposed method. Considering that data centers may need to employ the model during training slots, this slower rate limits AMU and underscores the strengths of our proposed approach.

Additionally, we conducted experiments on the CIFAR-100 dataset. The results, illustrated in Fig. \ref{fig:baseline_cifar100}, indicate a performance comparable to that observed with the CIFAR-10 dataset, wherein our proposed method surpasses the baseline approaches.

\subsubsection{Utility Function Efficiency}

In our proposed method, the concept of coreset gradient is employed to define our objective. It is important to acknowledge that within Federated Learning, alternative criteria for client selection exist. Our approach opts for the incorporation of the strategy described in \cite{lai2021oort}, recognized as a leading client selection technique, to serve as a benchmark for comparison. The outcomes, shown in Fig. \ref{fig:baseline_loss}, demonstrate that the application of the coreset gradient, as implemented in our method, significantly surpasses the performance of the OORT-based method, even when both strategies operate within an equivalent carbon budget.

\subsubsection{Carbon Intensity and Number of Times Selected}
We examine the correlation between a data center's carbon intensity and its likelihood of being selected for participation. For clarity, we plot the average number of times a data center is chosen over each 6-hour period (with a maximum selection frequency of 6 and a minimum of 0). The results, depicted in Fig. \ref{fig:carbon_number}, focus on four random data centers. They indicate a negative correlation: as a data center's carbon intensity decreases, its selection frequency increases. However, some anomalies exist where high carbon intensity corresponds with high selection frequency or vice versa. This can be attributed to our selection criteria, which consider relative carbon intensity. For instance, even if a data center experiences a period of low carbon intensity, another center might have an even lower intensity, precluding the selection of the former. Moreover, our selection is not solely based on carbon intensity; it also considers learning performance which can be impacted by each data center's local data distribution. These factors—non-IID distribution and relative carbon intensity—account for the observed discrepancies.

\subsubsection{IID Setting}
In the simulations above, we focused on scenarios where each data center stores non-IID data. In this section, we explore cases where the data stored at each data center is IID distributed. As shown in Fig. \ref{fig:iid} (a), (b), our proposed method continues to outperform both baseline methods. However, it is worth noting that the gains in test accuracy and convergence speed are less pronounced when compared to the results depicted in Fig. \ref{fig:baseline}. This diminished impact is due to the fact that when data across different data centers is IID, the importance of selecting a `coreset' data center is reduced.

Furthermore, we present the average carbon intensity and participation frequency of each data center in Fig.  \ref{fig:iid} (c). The IID setting allows us to largely ignore the impact of data distribution, focusing solely on carbon intensity. The results reveal that Denmark, which has the lowest average carbon intensity, participates the most, while data centers with higher average carbon intensity are rarely selected to participate. 

% We acknowledge that this could raise concerns about `equity' and aim to address this issue in our future work.

\subsubsection{Impact of $\epsilon$}
In Figure \ref{fig:epsilon}, we show the influence of the parameter $ \epsilon $ on the quality of data center selection for participation. The results confirm that even a small $ \epsilon $ value can offer a reliable estimate of full local data training in our simulations. However, it is important to acknowledge that the appropriate setting for $ \epsilon $ may vary depending on the specific learning task.

\subsubsection{Impact of Budget}
In Fig. \ref{fig:budget}, we investigate the impact of the carbon footprint budget $H$ on both our proposed method and the baseline methods. Intuitively, as the budget increases, the average test accuracy for all methods also rises. This is expected, as a larger carbon budget enables more data centers to participate in training. Importantly, our proposed method continues to outperform the two baseline methods across various budget levels. However, it is worth noting that the advantage of our proposed method diminishes as the budget expands. This decrease in relative benefit is logical: as more data centers are allowed to participate in each time slot, the task of selecting a `coreset' becomes less challenging, thereby reducing the edge of our proposed approach.

\begin{figure}[h]
\centering
\vspace{-10 pt}
\subfigure[Average Carbon Footprint]{
  \includegraphics[width=0.475\linewidth, height=0.313\linewidth]{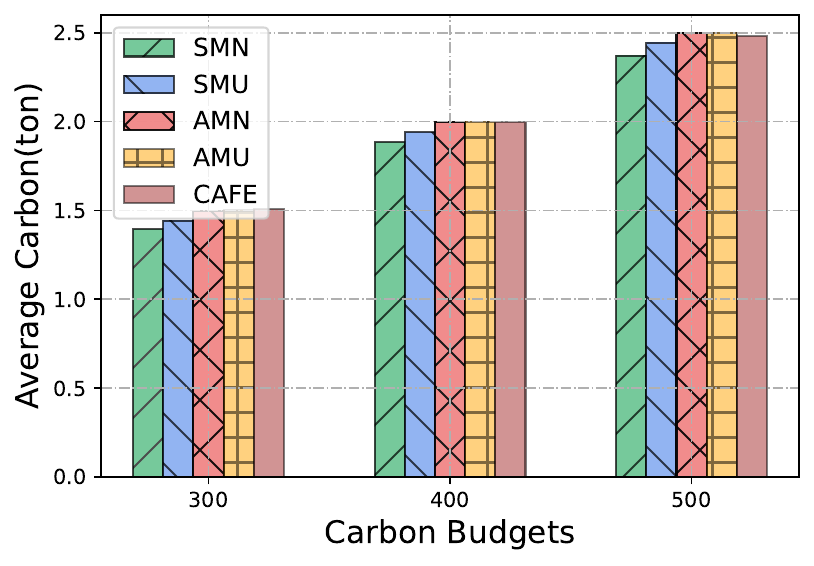}
  \label{fig:4sub1}}
\subfigure[Average Accuracy]{
  \includegraphics[width=0.475\linewidth, height=0.313\linewidth]{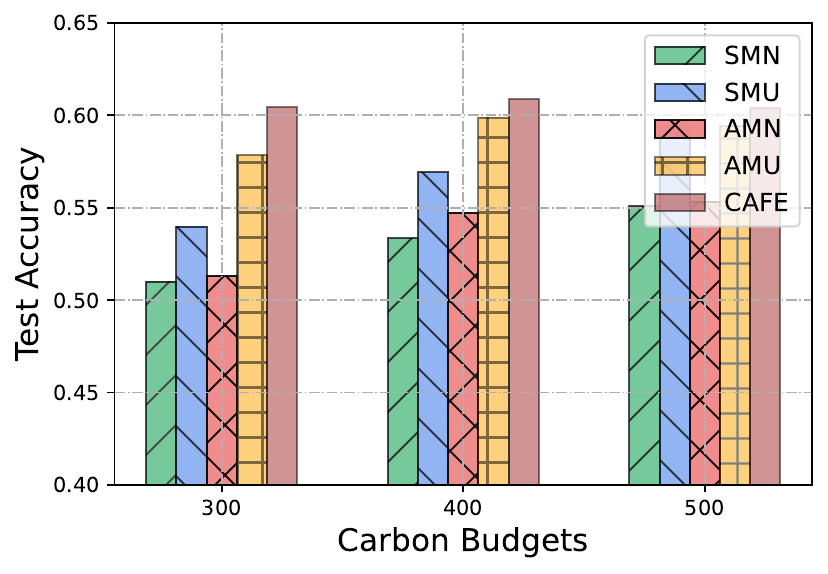}
  \label{fig:4sub2}}
  \vspace{-8 pt}
\caption{Impact of Budget.}
\label{fig:budget}
\vspace{-5 pt}
\end{figure}

\subsubsection{Comparisons with Extreme Cases}
In this subsection, we assess the performance of our proposed method, CAFE, by comparing it to two extreme cases.

In the first extreme case, the sole focus is on maximizing utility, with no consideration given to carbon budget constraints. Specifically, in Fig. \ref{fig:usub2}, the point labeled $ K^t = 4 $ represents the test accuracy achieved when four data centers are selected in each time slot, aiming solely to maximize utility $ U^t $ without taking any carbon constraints into account. Fig. \ref{fig:usub2} demonstrates that under identical carbon emissions, our proposed method consistently outperforms the scenario where the focus is solely on utility.

In the second extreme case, the exclusive focus is on minimizing carbon emissions, without consideration of utility. The results shown in Fig. \ref{fig:csub1} indicate that for any $K^t$, our proposed method continues to outperform the case that solely prioritizes carbon reduction under the same corresponding carbon emission. These findings from both extreme cases underscore the necessity of considering both utility and carbon footprint concurrently, rather than treating them as isolated factors.

\begin{figure}[h]
\centering
\subfigure[Focus on Utility Only]{
  \includegraphics[width=0.475\linewidth, height=0.313\linewidth]{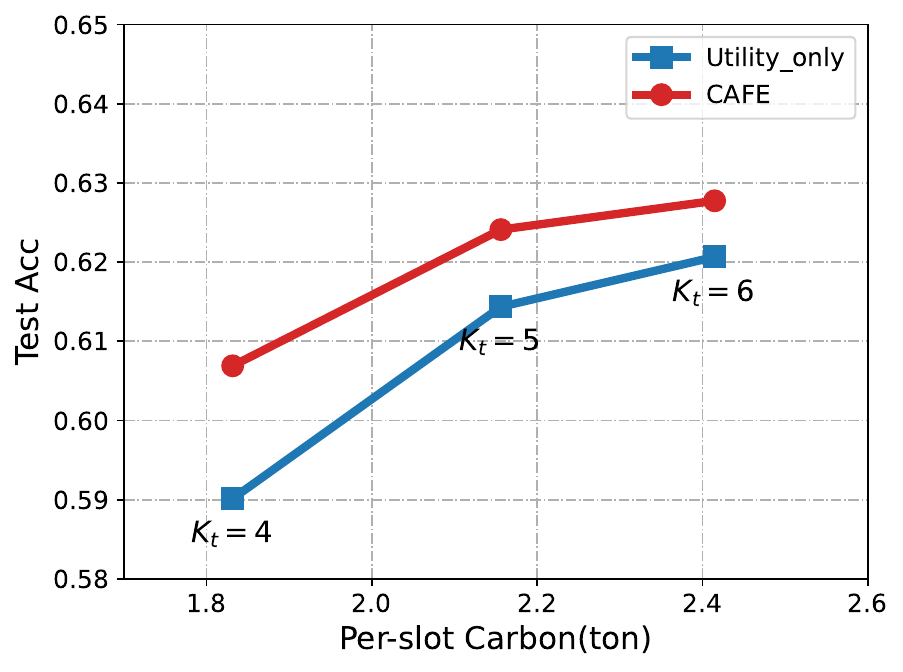}
  \label{fig:usub2}}
\subfigure[Focus on Carbon Only]{
  \includegraphics[width=0.475\linewidth, height=0.313\linewidth]{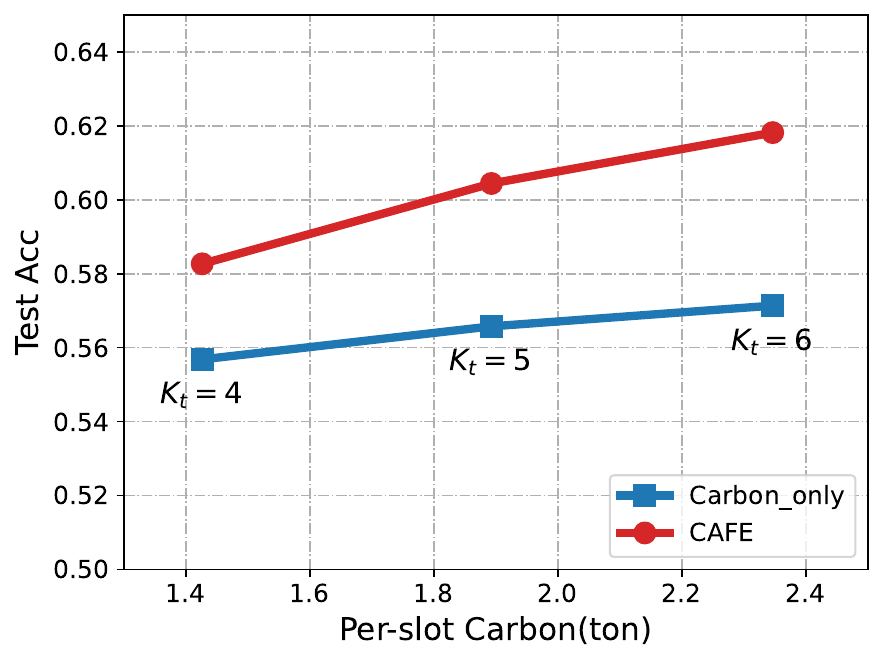}
  \label{fig:csub1}}
  \vspace{-8 pt}
\caption{Comparisons with Extreme Cases}
\label{fig:U/C_only}
\vspace{-5 pt}
\end{figure}

\subsubsection{Impact of Algorithms Parameters}
Lastly, we investigate the impact of the algorithm parameters of the proposed method on its performance.

\textbf{Impact of control parameter $V$.}
The control parameter $ V $ is instrumental in balancing the dual objectives of maximizing utility and adhering to the carbon footprint budget. A larger $ V $ places a greater emphasis on utility maximization, while a smaller $ V $ is more geared towards staying within the carbon budget. As illustrated in Fig. \ref{fig:V}, adjusting $ V $ effectively manages this trade-off. When $ V $ is increased, the proposed method achieves better test accuracy. However, it is important to note that a larger $ V $ also results in greater deviations from the carbon budget constraint. This suggests that focusing too heavily on performance optimization could compromise adherence to carbon limits, a conclusion that aligns with our theoretical findings. Therefore, the selection of $ V $ should be customized based on the specific needs and constraints of the application at hand.

% \begin{figure}[h]
% \centering
% \begin{minipage}[b]{0.47\textwidth}
% 	%\hspace{-2mm}
%    	\includegraphics[width=0.45\linewidth]{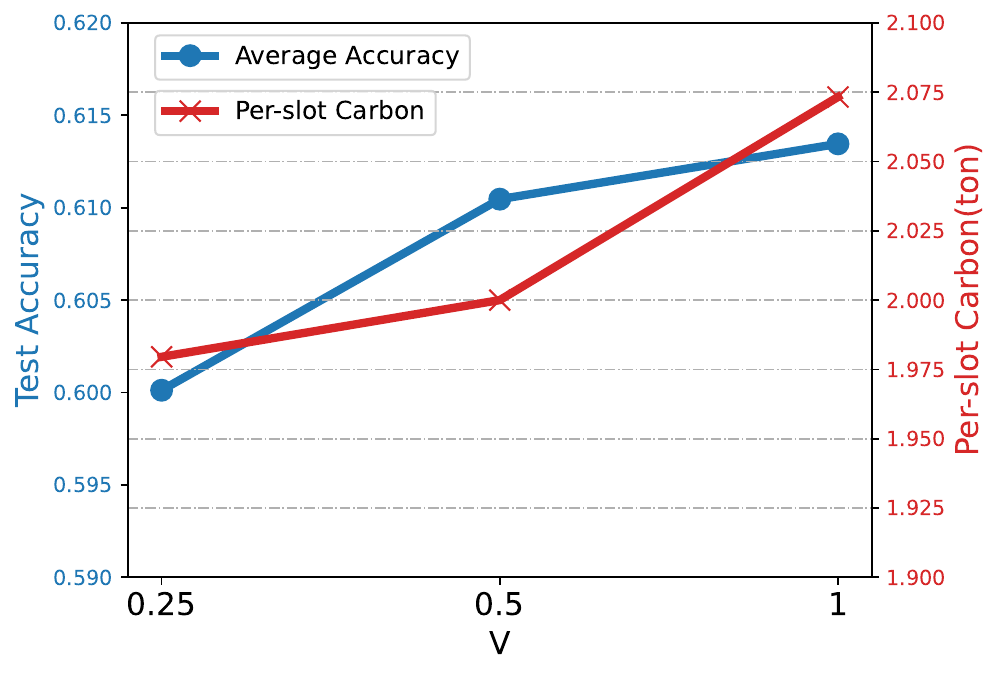}
%         \caption{Impact of V.}
%     \label{fig:V}
% 	\end{minipage}
% \begin{minipage}[b]{0.47\textwidth}
% 	%\hspace{-2mm}
%    	\includegraphics[width=0.45\linewidth]{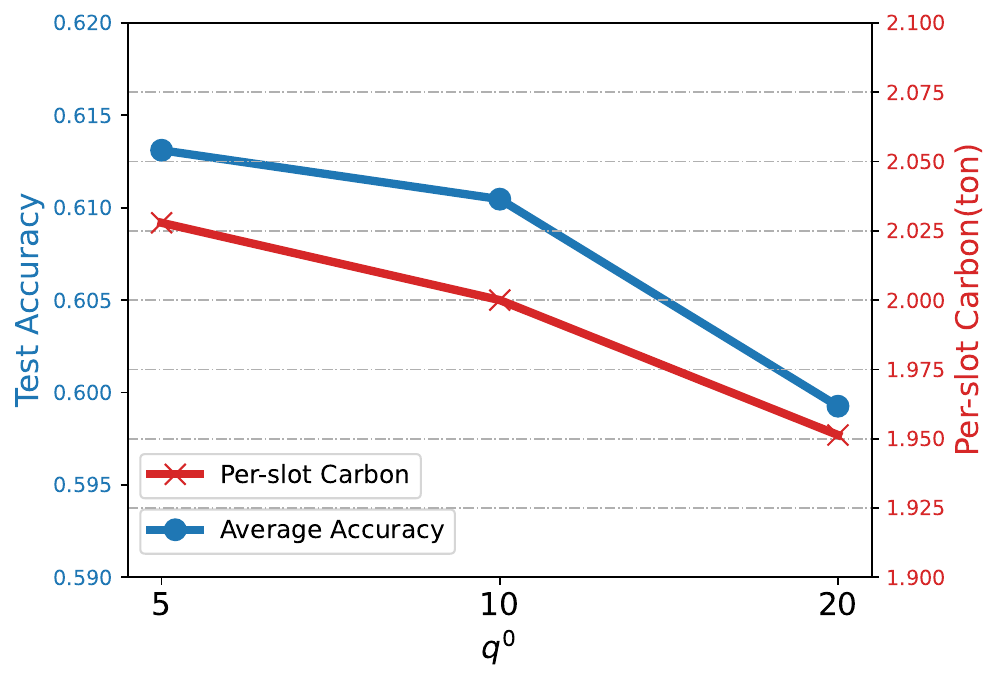}
%         \caption{Impact of $q^0$.}
%     \label{fig:q0}
% 	\end{minipage}
% \end{figure}

\begin{figure}[h]
\vspace{-5 pt}
    \centering
    \begin{minipage}[t]{0.49\linewidth}
        \includegraphics[width=0.99\linewidth]{figures/V_lines.pdf}
        \vspace{-20 pt}
        \caption{Impact of V.}
	    \label{fig:V}
    \end{minipage}
     \begin{minipage}[t]{0.49\linewidth}
        \includegraphics[width=0.99\linewidth]{figures/W_lines.pdf}
                \vspace{-20 pt}
        \caption{Impact of $ q^0 $.}
	    \label{fig:q0}
    \end{minipage}
    \vspace{-5 pt}
\end{figure}

% \begin{figure}[h]
% \centering
% \subfigure[Test accuracy and Carbon Footprint]{
%   \includegraphics[width=0.475\linewidth, height=0.313\linewidth]{figures/W_lines.pdf}
%   \label{fig:7sub2}}
% \subfigure[Convergence]{
%   \includegraphics[width=0.475\linewidth, height=0.313\linewidth]{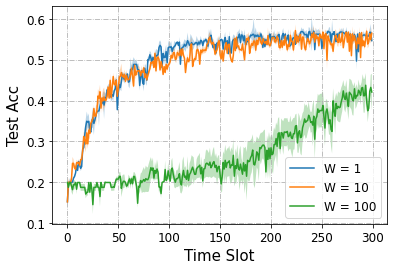}
%   \label{fig:7sub1}}
% \caption{Impact of W.}
% \label{fig:W}
% \end{figure}

\textbf{Impact of initial virtual queue length $q^0$.}
Figure~\ref{fig:q0} explores the impact of the initial virtual queue value $q^0$ on test accuracy and carbon emissions. The simulation results corroborate our theoretical analysis, showing that a larger $ q^0 $ tends to reduce carbon usage by limiting the number of data centers participating in the initial time slots. However, setting $ q^0 $ too high can adversely affect the average utility, which is further reflected in lower test accuracy. Contrary to the common practice of setting $ q^0 = 0 $, our findings suggest that a relatively small $ q^0 $ is effective in minimizing carbon emissions while still maintaining good convergence performance.

\section{Conclusion}

Given the growing imperative to harmonize the computational requirements of AI training with environmental sustainability, our research elucidates the complexities of training AI models in geo-distributed data centers. We introduce the Carbon-Aware Federated Learning (CAFE) framework, demonstrating its potential to enhance learning outcomes while adhering to environmental constraints. Building on the merits of Federated Learning, CAFE not only elevates learning performance and data privacy but also adeptly manages the variable nature of carbon intensity. Our empirical simulations validate CAFE's efficacy, illustrating its superiority over conventional methods in achieving a balanced learning approach without jeopardizing environmental sustainability. As the digital era progresses, the integration of such green AI strategies becomes essential for ensuring a sustainable technological and environmental trajectory.

\bibliographystyle{ACM-Reference-Format}
\bibliography{reference}

\appendix
\section{Proof of Proposition \ref{prop3}}
\label{proof_p1}
\begin{proof}
    We start with the definition of utility defined in Eq. \ref{metric}.
    \begin{align}
         & |U^t(\boldsymbol{a^t}|w_1) - U^t(\boldsymbol{a^t}|w_2)| \nonumber\\
         = &  |\sum_{j \in \mathcal{N}} \min_{i \in \mathcal{K}^t} \|\nabla f_j(w_2) - \nabla f_i(w_2)\| -  \|\nabla f_j(w_1) - \nabla f_i(w_1)\|| \nonumber\\
         \leq &  \sum_{j \in \mathcal{N}} \min_{i \in \mathcal{K}^t} \|\nabla f_j(w_2) - \nabla f_i(w_2)\|  +  \|\nabla f_j(w_1) - \nabla f_i(w_1)\| \nonumber\\
         = &  \sum_{j \in \mathcal{N}} \min_{i \in \mathcal{K}^t}   \|\nabla f_j(w_2) - \nabla f(w_2) + \nabla f(w_2) - \nabla f_i(w_2)\| \nonumber\\
          +&   \sum_{j \in \mathcal{N}} \min_{i \in \mathcal{K}^t}   \|\nabla f_j(w_1) - \nabla f(w_1) + \nabla f(w_1) - \nabla f_i(w_1)\|  \nonumber\\
          \leq &  \sum_{j \in \mathcal{N}} \min_{i \in \mathcal{K}^t}  ( \|\nabla f_j(w_2) - \nabla f(w_2)\| + \|\nabla f(w_2) - \nabla f_i(w_2)\|) \nonumber\\
          +&   \sum_{j \in \mathcal{N}} \min_{i \in \mathcal{K}^t}   (\|\nabla f_j(w_1) - \nabla f(w_1)\| + \|\nabla f(w_1) - \nabla f_i(w_1)\| )\nonumber\\
          \leq &  \sum_{j \in \mathcal{N}} 2\delta + \sum_{j \in \mathcal{N}} 2\delta = 4 N \delta
    \end{align}
    where the first and second inequality is based on triangle inequality and the last inequality is based on Assumption 3.
\end{proof}

\section{Proof of Theorem \ref{thm1}}
\label{proof_t1}
\begin{proof}
The Lyapunov drift of the virtual queue $q^t$  is:
\begin{align}
    & \Delta (t) \coloneqq \frac{1}{2} \left[ (q^{t+1})^2-(q^{t})^2 \right] \leq \frac{1}{2} \left[\left( {c^t - H/T + q^{t}} \right)^2-\left( q^{t} \right)^2\right] \nonumber\\
    & = q^t(c^t - H/T) + \frac{1}{2} (c^t - H/T)^2  \leq q^t(c^t - H/T) + B_1
\end{align}
where $B_1 > 0$ is a finite constant number since the total carbon constraint is bounded. Then we have:
\begin{align}
% \label{eq_1}
    & V \cdot U^t (\boldsymbol{a}^t) - \Delta(t)
     \geq  V \cdot U^t (\boldsymbol{a}^t) - q^t(c^t - H/T) - B_1 \label{eq_B1_constrain}
\end{align}
Note $\boldsymbol{a}^{*,0}, \dots, \boldsymbol{a}^{*,t}$ be the sequence of data center selection decisions derived by our per-slot algorithm. The above inequality clearly show that such sequence decisions at least be $1/\gamma$-approximate to the optimal solutions which maximizes the lower bound of $V \cdot U^t (\boldsymbol{a}^t) - \Delta(t)$. Next, we consider a specific sequence of decisions where $\Bar{a}_i^t = 0, \forall i,t$. In this case, the utility metric becomes $U^t(\boldsymbol{\Bar{a}}^t) =  b - 2 N \max_{i \in \mathcal{N}} \| \nabla f_i(w)\| $ and $c^t (\boldsymbol{\Bar{a}}^t| \boldsymbol{\beta}^t) = \sum_{i \in \mathcal{N}} \beta_i^t E_i^c$. Consider Eq. \eqref{eq_B1_constrain}, we have:
\begin{align}
    &V \cdot U^t (\boldsymbol{a}^{*,t}) - \Delta(t)
    \geq   V \cdot U^t (\boldsymbol{a}^{*,t}) - q^t(c^t - H/T) - B_1\nonumber \\
    \geq &\frac{1}{\gamma}V( b -  2 N \max_{i \in \mathcal{N}} \| \nabla f_i(w)\| ) - \frac{1}{\gamma}q^t \sum_{i \in \mathcal{N}} \beta_i^t E_i^c + q^tH/T- B_1 \nonumber\\
     \geq &\frac{1}{\gamma}V( b -  2 N \max_{i \in \mathcal{N}} \| \nabla f_i(w)\|) - \frac{1}{\gamma}q^t (\sum_{i \in \mathcal{N}} \beta_i^t E_i^c - H/T) - B_1 \nonumber\\
     \geq &\frac{1}{\gamma}V( b -  2 N \max_{i \in \mathcal{N}} \| \nabla f_i(w)\| ) - B_1
\end{align}
where the second inequality is based on Definition 1 and the last inequality is based on the Assumption 1. 
We arrange the above inequality and then have:
\begin{align}
    \Delta(t) &\leq V \cdot U^t (\boldsymbol{a}^{*,t}) - \frac{1}{\gamma}V( b -  2 N \max_{i \in \mathcal{N}} \| \nabla f_i(w)\|) + B_1 \nonumber\\
    &\leq \frac{\gamma - 1}{\gamma} Vb + \frac{2VN}{\gamma}  \max_{i \in \mathcal{N}} \| \nabla f_i(w)\| + B_1 \nonumber\\
    &\leq \frac{V}{\gamma} ((\gamma -1)b + 2N G) + B_1 \nonumber\\
\end{align}
The last inequality holds based on the Assumption \ref{ass2}, Then we further note that
\begin{align}
     \frac{1}{2} \left[ (q^{T})^2-(q^{0})^2 \right] = \sum_{t=0}^{T-1} \Delta(t) \leq T (\frac{V}{\gamma} ((\gamma -1)b + 2N G) + B_1 )
\end{align}

Then we have that $q^T \leq \sqrt{(q^0)^2 + T (\frac{2V}{\gamma}((\gamma -1)b + 2N G) + 2B_1)}$. According to the virtual queue dynamics,  $q^{t+1} \geq q^t + c^t - H/T$. This leads to 
\begin{align}
    &\frac{1}{T}\sum_{t=0}^{T-1} c^t - \frac{H}{T} \leq \frac{1}{T}\sum_{t=0}^{T-1}(q^{t+1} - q^t) = \frac{q^T - q^0}{T}\nonumber\\
    \leq & \sqrt{\frac{(q^0)^2}{T^2} + \frac{\frac{2V}{\gamma}((\gamma -1)b + 2N G) + 2B_1}{T}} - \frac{q^0}{T}
\end{align}

\end{proof}

\section{Proof of Theorem \ref{thm2}}
\label{proof_t2}
\begin{proof}

In the following proof, we note that $\boldsymbol{a}^{opt,t} $ represents the data center selection outcome at each time slot $t$ that achieves the offline optimal solution. Additionally, as denoted in the before section,  $w^{full,t} $ signifies the model weight, assuming that \textbf{all} data center being selected at time slot $t$. In contrast, $w^{real, t} $ designates the actual model weight attained at time slot $t$. This distinction serves to highlight the relationship between model weights and data center selections across different time slots, adding a layer of complexity to the problem.

Consider the objective function minus drift of the per-slot algorithm:
\begin{align}
    & V \cdot \frac{1}{T}\sum_{t=0}^{T-1} U^t(\boldsymbol{a^t}|w^{real,t}) - \frac{1}{T}\sum_{t=0}^{T-1} \Delta(t)\nonumber\\
    \geq & V \cdot \frac{1}{T}\sum_{t=0}^{T-1} U^t(\boldsymbol{a^t}|w^{real,t})- \frac{1}{T}\sum_{t=0}^{T-1} q^tc^t + \frac{1}{T}\sum_{t=0}^{T-1}q^t H/T - B_1\nonumber\\
    \geq &  \frac{1}{T}\sum_{t=0}^{T-1} (V \cdot U^t(\boldsymbol{a^t}|w^{real,t})- q^tc^t) + \frac{1}{T}\sum_{t=0}^{T-1}q^t H/T - B_1\nonumber\\
    \geq &  \frac{1}{\gamma T}\sum_{t=0}^{T-1}(V \cdot U^t(\boldsymbol{a}^{opt, t}|w^{real,t})- q^tc^{opt,t}) + \frac{1}{T}\sum_{t=0}^{T-1}q^t H/T - B_1\nonumber\\
    \geq & \frac{1}{\gamma T}V \cdot \sum_{t=0}^{T-1}U^t(\boldsymbol{a}^{opt, t}|w^{real,t}) - \frac{1}{\gamma T}\sum_{t=0}^{T-1}q^t(c^{opt,t} - H/T) - B_1 \nonumber\\
    \geq & \frac{1}{\gamma T}V\sum_{t=0}^{T-1}[U^t(\boldsymbol{a}^{opt, t}|w^{full,t}) - 4N\delta]  - \frac{1}{\gamma T}\sum_{t=0}^{T-1}q^t(c^{opt,t} - H/T)  - B_1 \nonumber\\
    \geq &  \frac{V \cdot \text{OPT}}{\gamma}- \frac{4}{\gamma}NV\delta - \frac{1}{\gamma T}\sum_{t=0}^{T-1}q^t(c^{opt,t} - H/T)  - B_1 \nonumber\\
    \geq &  \frac{V \cdot \text{OPT}}{\gamma}- \frac{4}{\gamma}NV\delta - \frac{1}{\gamma T}\sum_{t=0}^{T-1}q^t|c^{opt,t} - H/T|  - B_1 
\end{align}
The first inequality is due to the bound on $\Delta(t)$, the forth inequality is due to Proposition \ref{prop3} and the last inequality is based on the fact that $q^t$ is non-negative for all $t$. Here, $c^{opt,t}$ is the cost incurred in time slot $t$ by the offline optimal solution.  

Note that we can bound $q^t$ by the definition:
\begin{align}
    q^t - q^0 \leq t (c^{max} - H/T) \leq Tc^{max} - H
\end{align}
where  $c^{max}$ denotes the $max\{c^{opt,t}\}$.

Then we substitute and have:
\begin{align}
    & V \cdot \frac{1}{T}\sum_{t=0}^{T-1} U^t(\boldsymbol{a^t})- \frac{1}{T}\sum_{t=0}^{T-1} \Delta(t) \nonumber\\
    \geq & \frac{V \cdot \text{OPT}}{\gamma}- \frac{4}{\gamma}NV\delta - \frac{1}{\gamma T}\sum_{t=0}^{T-1}q^t|c^{opt,t} - H/T|  - B_1 \nonumber\\
    \geq &  \frac{V \cdot \text{OPT}}{\gamma}- \frac{4}{\gamma}NV\delta - \frac{1}{\gamma T}\sum_{t=0}^{T-1}q^0|c^{opt,t} - H/T|  \nonumber\\
    & - \frac{1}{\gamma T}\sum_{t=0}^{T-1}(T c^{max} - H)|c^{opt,t} - H/T| - B_1 \nonumber\\
    \geq &  \frac{V \cdot \text{OPT}}{\gamma}- \frac{4}{\gamma}NV\delta - \frac{q^0c^{max}}{\gamma} - \frac{(T c^{max} - H)c^{max}}{\gamma} - B_1
\end{align}

Finally, noticing $\frac{1}{T}\sum_{t=0}^{T-1} \Delta(t) \geq -\frac{1}{2T}(q^0)^2$ and moving it to the right hand side yields
\begin{align}
&V\cdot \frac{1}{T}\sum_{t=0}^{T-1} U^t(\boldsymbol{a^t}) \geq  \frac{V\text{OPT}}{\gamma}- \frac{4}{\gamma}NV\delta - (\frac{q^0c^{max}}{\gamma} + \frac{1}{2T}(q^0)^2) \nonumber\\
&- \frac{(T c^{max} - H)c^{max}}{\gamma} - B_1
\end{align}
Dividing both sides by $V$ yields:
\begin{align}
\label{thm2_result}
& \frac{1}{T}\sum_{t=0}^{T-1} U^t(\boldsymbol{a^t}) \geq  \frac{\text{OPT}}{\gamma}- \underbrace{\frac{4}{\gamma}N\delta}_{G_1} - \underbrace{\frac{1}{V}(\frac{q^0c^{max}}{\gamma} + \frac{1}{2T}(q^0)^2)}_{G_2} \nonumber\\
&- \underbrace{\frac{1}{V}(\frac{(T c^{max} - H)c^{max}}{\gamma} + B_1)}_{G_3}
\end{align}
\end{proof}

\end{document}